\def\1{\bm{1}}
\newcommand{\gaussconst}{\ensuremath{C_{\sigma}}}
\DeclareMathAlphabet{\mathsfit}{\encodingdefault}{\sfdefault}{m}{sl}
\SetMathAlphabet{\mathsfit}{bold}{\encodingdefault}{\sfdefault}{bx}{n}
\newcommand{\R}{\mathbb{R}}
\DeclareRobustCommand{\bigO}{%
  \text{\usefont{OMS}{cmsy}{m}{n}O}%
}
\newcommand{\graph}{\ensuremath{\mathcal{G}}}
\newcommand{\nodes}{\ensuremath{\mathcal{V}}}
\newcommand{\edges}{\ensuremath{\mathcal{E}}}
\newcommand{\landmarks}{\ensuremath{\mathfrak{L}}}
\newcommand{\witness}{\ensuremath{\mathrm{Wit}}}
\newtheorem*{rep@theorem}{\rep@title}
\newcommand{\newreptheorem}[2]{%
	\newenvironment{rep#1}[1]{%
		\def\rep@title{\textbf{#2} \ref{##1}}%
		\begin{rep@theorem}}%
		{\end{rep@theorem}}}
\theoremstyle{plain}
\newtheorem{theorem}{Theorem}[section]
\newtheorem{proposition}[theorem]{Proposition}
\theoremstyle{definition}
\newtheorem{definition}[theorem]{Definition}
\theoremstyle{remark}
\newtheorem{remark}[theorem]{Remark}
\title{When Witnesses Defend:\\ A Witness Graph Topological Layer for Adversarial Graph Learning}
\author {
    Naheed Anjum Arafat\textsuperscript{\rm 1},
    Debabrota Basu\textsuperscript{\rm 2},
    Yulia Gel\textsuperscript{\rm 3},
    Yuzhou Chen\textsuperscript{\rm 4}
}
\begin{document}

\maketitle

\begin{abstract}
Capitalizing on the intuitive premise that shape characteristics are more robust to perturbations,
we bridge adversarial graph learning with the emerging tools from computational topology, namely, persistent homology representations of graphs.
We introduce the concept of witness complex to adversarial analysis on graphs, which allows us to focus only on the salient shape characteristics of graphs, yielded by the subset of the most essential nodes (i.e., landmarks), with minimal loss of topological information on the whole graph. The remaining nodes are then used as witnesses, governing which higher-order graph substructures are incorporated into the learning process. Armed with the witness mechanism, we design \emph{Witness Graph Topological Layer (WGTL)}, which systematically integrates both local and global topological
graph feature representations, the impact of which is, in turn, automatically controlled by the robust regularized topological loss. Given the attacker's budget, we derive the important stability guarantees of both local and global topology encodings and the associated robust topological loss. We illustrate the versatility and efficiency of WGTL by its integration with five GNNs and three existing non-topological defense mechanisms. Our extensive
experiments across six datasets demonstrate that WGTL boosts the robustness of GNNs across a range of perturbations and against a range of adversarial attacks.
Our datasets and source codes are available at \url{https://github.com/toggled/WGTL}.
\end{abstract}

%

\section{Introduction}


Recent studies have shown that Graph neural networks (GNNs) are vulnerable to adversarial attacks. Small, often unnoticeable perturbations to the input graph might result in substantial degradation of GNN's performance in downstream tasks~\cite{survey2021}.
In turn, compared to non-graph data, adversarial analysis of graphs still remains largely underexplored~\cite{sun2022adversarial}. 
Hence, systematic assessment of adversarial contagions and consequently development of robust GNN models able to withstand a wide spectrum of malicious attacks are of significant practical importance. 

Presently, the three main strategies to defend GNNs against adversarial attacks are graph purification, adversarial training, and adversarial defense based neural architectures~\citep{feng2019graph, gunnemann2022graph,metapgd}.
These existing methods largely rely on pairwise relationships in the graph at a node level while ignoring higher-order graph (sub)structures, their multi-scale properties, and interrelationships, which are instrumental for the downstream learning task~\citep{benson2018simplicial,torres2021and}. Relying on pairwise relationships also results in the removal of a considerable amount of the edges that are actually clean edges, which decreases the gain in robustness~\cite{in2024self}. 



In turn, in the last few years, we have observed a spike of interest in the synergy of graph learning and Persistent Homology (PH) representations of graphs~\citep{zhao2019learning,carriere2020perslay,horn2021topological,yan2022neural,chen2022structure,topopooling}. 
PH representations enable us to
glean intrinsic information about the inherent object shape. By shape here, we broadly understand properties that are invariant under continuous transformations such as twisting, bending, and stretching.
This phenomenon can be
explained by the important higher-order information, which PH-based shape descriptors deliver about the underlying graph-structured data.
This leads to an enhanced GNN performance in a variety of downstream tasks, such
as link prediction, and node and graph classification~\citep{hofer2020graph, carriere2020perslay,yan2021link, horn2021topological, chen2022topoattn}. 
Furthermore, in view of the invariance with respect to continuous transformations, intuitively we can expect that shape characteristics are to yield higher robustness to random perturbations and adversarial attacks. While this intuitive premise of robustness and its relationship with DNN architectures has been confirmed by some recent studies~\citep{chen2021topological, gebhart2019characterizing,goibert2022adversarial}, to the best of our knowledge, there are no attempts to incorporate PH-based graph representations for adversarial defense.

%
%
%


In this paper, we bridge this gap by merging adversarial graph learning with PH representations of graph-structured data.
Our key idea is to leverage the concept of witness complex for graph learning. This allows us firstly, to enhance the computational efficiency of the proposed 
topological defense, which is one of the primary bottlenecks for the wider adoption of topological methods, and lastly, to reduce the impact of less important or noisy graph information.
In particular, the goal of the witness complex is to accurately 
estimate the intrinsic shape properties of the graph using not all available graph information, but {\it only} a subset of the most representative nodes, called {\it landmarks}. The remaining nodes are then used as {\it witnesses}, governing which higher-order graph substructures 
shall be incorporated into the process of extracting shape characteristics and the associated graph learning task. 
This mechanism naturally results in two main benefits. 
First, it allows us to drastically reduce the computational costs. Second, it allows us to extract salient shape characteristics (i.e., skeleton shape). Our topological defense takes the form of the \emph{Witness Graph Topological Layer (WGTL)} with three novel components: \textit{local and global witness complex-based topological
encoding}, \textit{topology prior aggregation}, and \textit{robustness-inducing topological loss}. 

The \textit{local witness complex-based features} encapsulate graph topology within the local node neighborhoods, while the \textit{global witness complex-based features} describe global graph topology. 
Using only local topology prior to the loss function might be vulnerable to local attacks, while only global topology prior might be more susceptible to global attacks. To defend against both types of attacks, both local and global topology prior needs to be combined, thus motivating the design of the topology prior aggregator.
Inspired by studies such as~\citet{hu2019topology,  carriere2021optimizing}, we \textit{use the robust topological loss as a regularizer to a supervised loss for adversarially robust node representation learning}. This allows for control over which shape features are to be included in the defense mechanism. 
%
%

\noindent\textbf{Our Contributions.} Our contributions are summarized as follows:
\begin{itemize}

\item We propose the first approach which systematically bridges adversarial graph learning with persistent homology representations of graphs.

\item We introduce a novel topological adversarial defense for graph learning, i.e. the \emph{Witness Graph Topological Layer (WGTL)}, based on the notion of the witness complex. 
WGTL systematically integrates both local and global higher-order graph characteristics.
Witness complex enables us to focus only on the salient shape characteristics delivered by the landmark nodes, thereby reducing the computational costs and minimizing the impact of noisy graph information.

\item We derive the stability guarantees of both local and global topology encodings and the robust topological loss, given an attacker's budget. These guarantees show that local and global encodings are stable to external perturbations, while the stability depends on the goodness of the witness complex construction.

\item Our extensive experiments spanning six datasets and eight GNNs indicate that WGTL boosts the robustness capabilities of GNNs across a wide range of local and global adversarial attacks, 
resulting in relative gains up to 18\%. 
WGTL also smoothly integrates with other existing defenses, such as Pro-GNN, GNNGuard, and SimP-GCN improving the relative performance up to 4.95\%, 15.67\%, and 5.7\% respectively. 
In addition, WGTL is effective on large-scale and heterophilic graphs, as well as against adaptive and node-feature attacks.
\end{itemize}
\subsection{Related Works: Defenses for GNNs}
There are broadly three types of defenses: graph purification-based, adversarially robust training, and adversarially robust architecture~\citep{gunnemann2022graph}. 

Notable defenses that purify the input graph include SG-GSR~\citep{in2024self}, Pro-GNN~\citep{prognn} and SVD-GCN~\citep{svdgcn}. These methods learn to remove adversarial edges from the poisoned graph without considering higher-order interactions. In contrast, WGTL primarily focuses on learning the key higher-order graph interactions at both local and global levels and then adaptively assessing their potential defense role via topological regularizer. The local and global topological encodings remain robust despite the false positive edges; as a result, WGTL alleviates the problems associated with false positive edges~\cite{in2024self}, enhancing the overall resilience against attacks.

The adversarial training-based defense methods augment node features with gradients~\cite{flag}, or datasets by generating worst-case perturbations~\citep{pgdattack}. The goal is to train with the worst-case adversarial perturbations such that the learned model weights become more robust against worst-case perturbation~\citep{gunnemann2022graph}. However, adversarial training can not defend against more severe perturbation than the ones they were trained with. 

Better architectures such as VAE~\citep{vgae}, Bayesian uncertainty quantification~\citep{feng2021uag}, and Attention~\citep{zhu2019robust,tang2020} have also been proposed for adversarial defense. 
However, none of these tools have explored the use of robust, higher-order graph topological features as prior knowledge for improved defense. Recently,~\citet{topolayer20} designed a topology-driven attack on images and topological loss, but this approach neither considers graph data nor adversarial defense. 
Among the topology-driven defenses, GNNGuard~\citep{zitnikGNNGuard} discusses graphlet degree vectors 
to encode node structural properties such as triangles and betweenness centrality. 
However, unlike the PH features used in WGTL, the graphlet approach is empirical, without 
theoretical robustness guarantees. 
In turn, while the robust loss function has been used before~\citet{zugner2019certifiable}, topological losses have never been used in conjunction with adversarial defenses. The use of a robust loss function as a regularizer for defense is not new; for instance, ~\citet{zugner2019certifiable} proposed \emph{robust hinge loss} for defense. However, it is unknown if topological losses~\cite{hu2019topology,topolayer20} can improve adversarial robustness or not.

\section{Background: Graphs, Persistent Homology, Complexes, Adversarial ML}\label{sec:background}
\noindent\textbf{Topology of Graphs.} $\mathcal{G} \triangleq (\mathcal{V}, \mathcal{E}, \boldsymbol{X})$ denotes an attributed graph. $\mathcal{V}$ is a set of $N$ nodes. $\mathcal{E}$ is a set of edges. $\boldsymbol{X} \in \mathbb{R}^{N \times F}$ is a node feature matrix, where each node corresponds to an $F$ dimensional feature. The adjacency matrix of $\mathcal{G}$ is  a symmetric matrix $\boldsymbol{A} \in \mathbb{R}^{N \times N}$ such that $\boldsymbol{A}_{uv} \triangleq \omega_{uv}$, i.e., edge weight, if nodes $u$ and $v$ are connected and 0, otherwise. For unweighted graphs, we observe $\omega_{uv}= 1$.
Furthermore, $\boldsymbol{D}$ represents the degree matrix of $\mathcal{G}$, such that $\boldsymbol{D}_{uu} \triangleq \sum_{v \in \mathcal{V}} \boldsymbol{A}_{uv}$ and $0$, otherwise. 


The central ideas leveraged in this paper are the local and global topology of a graph. The topology of a graph is defined by corresponding geodesic distance. The geodesic distance $d_{\graph}(u,v)$ between a pair of vertices $u$ and $v \in \mathcal{V}$ is defined as the length of the shortest path between $u$ and $v$. The path length is defined as the sum of weights of the edges connecting the vertices $u$ and $v$. Endowed with the canonical metric induced by the geodesic distance $d_{\graph}: \mathcal{V} \times \mathcal{V} \rightarrow \mathbb{R}^{\geq 0}$, a weighted simple graph $\mathcal{G}$ transforms into a metric space $(\mathcal{V},d_{\graph})$. For a given positive real number $\epsilon >0$, the set of nodes that are no more than geodesic $\epsilon$ away from a given node determines the local topology of that node. When $\epsilon = \mathrm{Diam}(\mathcal{G})$, i.e. the diameter of $\mathcal{G}$, we retrieve the global topology of the graph. Increasing $\epsilon$ from $1$ to $\mathrm{Diam}(\mathcal{G})$ allows us to retrieve the evolution of the inherent graph features, like connected components, cycles, voids, etc.~\citep{edelsbrunner2000topological,zomorodian2005topology}. 

\noindent\textbf{Persistent Homology.} To retrieve the evolution of graph features, we employ a Persistent Homology-based approach, a machinery rooted in computational topology. Our topological space originates from subgraphs $\{\mathcal{G}_\alpha: \forall (u,v) \in \mathcal{G}_\alpha, d_{\mathcal{G}}(u,v) \leq \alpha\}_{1 \leq \alpha\leq Diam(\mathcal{G})}$, where every subgraph $\mathcal{G}_\alpha$ contains all edges of length less than $\alpha$. To incorporate higher-order information, simplicial complexes $\{\mathscr{K}(\mathcal{G}_{\alpha})\}_\alpha$ are constructed, where a higher-order simplex $\sigma \in \mathscr{K}(\mathcal{G}_{\alpha})$ if for every node pair $(u,v) \in \sigma$, the simplex $[uv]\in \mathscr{K}(\mathcal{G}_{\alpha})$, in other words $d_\mathcal{G}(u,v) \leq \alpha$. 
This nested sequence of simplicial complexes is called a \textit{filtration}, with $\alpha$ representing the filtration value.

The key idea of PH is to choose multiple scale parameters $\alpha$ and study changes in topological features that occur to $\mathcal{G}$, which evolves with respect to $\alpha$. 
Equipped with the filtration of complexes, we can trace data shape patterns, i.e. the $d$ homology groups, such as independent components, holes, and cavities which appear and merge as scale $\alpha$ changes. For each topological feature $\rho$, we record the indices $b_{\rho}$ and $d_{\rho}$ of $\mathscr{K}(\mathcal{G}_{b_{\rho}})$ and $\mathscr{K}(\mathcal{G}_{d_{\rho}})$, where $\rho$ is first and last observed, respectively. We say that
a pair $(b_{\rho}, d_{\rho})$ represents the birth and death times of $\rho$, and $(d_{\rho} - b_{\rho})$ is its corresponding lifespan (or \textit{persistence}). In general, topological features with longer persistence are considered valuable, while features with shorter persistence are often associated with topological noise. The extracted topological information over the filtration $\{\mathscr{K}_{\alpha_j}\}$ is then represented in $\mathbb{R}^2$ as a {\it Persistence Diagram (PD)}, such that
$\mathcal{\text{PD}}=\{(b_{\rho},d_{\rho}) \in \mathbb{R}^2: d_{\rho} > b_{\rho}\}\cup \Delta$. $\Delta= \{(t, t) | t \in \mathbb{R}\}$ is the diagonal set containing points counted with infinite multiplicity. 
Another useful representation of persistent topological features is \textit{Persistence Image} (PI) that vectorizes the persistence diagram with a Gaussian kernel and a piece-wise linear weighting function~\citep{persistence_images}. Persistence images are deployed to make a classifier ``topology-aware" and are known to be helpful in graph classification~\citep{zhao2019learning,rieck2020uncovering}.
%

\noindent\textbf{Witness Complexes.} There are multiple ways to construct an abstract simplicial complex $\mathscr{K}$~\citep{zomorodian2005topology}. Due to its computational benefits, one of the widely adopted approaches is a \textit{Vietoris-Rips complex} ($\mathrm{VR}$).
However, the $\mathrm{VR}$ complex uses the entire observed data to describe the underlying topological space, and thus, does not efficiently scale to large and noisy datasets~\citep{zomorodian2010}. 
In contrast, a \textit{witness complex} captures the data shapes using only on a significantly smaller subset $\mathfrak{L}\subseteq \mathcal{V}$, called a set of {\it landmarks}~\citep{witness}. In turn, all other points in $\mathcal{V}$ are used as ``witnesses" that govern the appearances of simplices in the witness complex. \citet{arafat2020epsilon} demonstrate algorithms to construct landmark sets, their computational efficiencies, and stability of the induced \textit{witness complex}. We leverage witness complex to scale to large graph datasets.
\begin{definition}[{Weak Witness Complex}~\citep{witness}]
\label{def2}
We call $w\in \mathcal{V}$ to be a {\it weak witness} for a simplex $\sigma=[v_0 v_1 \ldots v_l]$, where $v_i\in \mathcal{V}$ for $i=0,1,\ldots, l$ and $l \in\mathbb{N}$, with respect to $\mathfrak{L}$ if and only if $d_{\mathcal{G}}(w,v) \leq d_{\mathcal{G}}(w,u)$ for all $v\in \sigma$ and $u \in  \mathfrak{L} \setminus \sigma$. The {\it weak witness complex} $\witness(\mathfrak{L}, \mathcal{G})$ of the graph $\mathcal{G}$ with respect to the landmark set $\mathfrak{L}$  has a node set formed by the landmark points in $\landmarks$, and a subset $\sigma$ of $\mathfrak{L}$ is in $\witness(\mathfrak{L}, \mathcal{G})$ if and only if there exists a corresponding weak witness in the graph $\mathcal{G}$.
\end{definition}

\noindent\textbf{Adversarial ML and Robust Representations.} Graph Neural Networks (GNNs) aim to learn a labelling function that looks into the features the nodes in the graph $\graph$ and assign one of the $C$ labels $y_v \in \{1,\ldots, C\}$ to each node $v \in \mathcal{V}$~\citep{kipf}. In order to learn to labelling, GNNs often learn compact, low-dimensional representations, aka \textit{embeddings} $R: \graph \times \mathcal{V} \rightarrow \R^{r}$, for nodes that capture the structure of the nodes' neighbourhoods and their features, and then apply a classification rule $f: \R^r \rightarrow \{1,\ldots, C\}$ on the embedding~\citep{kipf,graphsage,zitnikGNNGuard}.

The goal of a robust GNN training mechanism is to learn a labelling function $f\circ R$ such that the change in the predicted labels, i.e. $|(f\circ R)(\graph') - (f\circ R)(\graph)|$, is the minimum, when a graph $\graph$ is adversarially perturbed to become $\graph'$~\citep{zitnikGNNGuard}. The budget of perturbation is defined by $\delta = \|\graph - \graph'\|_p$. $p$ is often fixed to $0$ or $1$~\citep{pgdattack,wu2019adversarial,zugner2019certifiable}.
There are different ways to design a robust training mechanism, such as training with an adversarially robust loss function~\citep{pgdattack}, using a stabilising regularizer to the classification loss~\citep{zugner2019certifiable}, learning a robust representation of the graph~\citep{engstrom2019adversarial,liu2023towards}, etc.

In this paper, we aim to design a \textit{robust representation} $R$ of the graph $\graph$ using its persistent homologies. Specifically, \textit{we call a graph representation $R$ robust,} if for $p, q > 0$,
\begin{align*}
    \|R(\graph) - R(\graph')\|_p = \bigO(\delta),~~\text{when}~\|\graph-\graph'\|_q = \delta.
\end{align*}
In the following section, we propose WGTL, which is a topology-aware graph representation, and show that WGTL achieves this robust representation property.
\section{Learning a Robust Topology-aware Graph Representation}\label{sec:method}

The general idea is that encoding robust graph structural features as prior knowledge to a graph representation learning framework should induce a degree of robustness against adversarial attacks. 
Graph measures that capture global properties of the graph and measures that rely on aggregated statistics are known to be robust against small perturbations. Examples include degree distribution, clustering coefficients, average path length, diameter, largest eigenvalue and the corresponding eigenvector, and certain centrality measures, e.g., betweenness and closeness centralities. However, these measures are not multiscale in nature. Therefore, they fail to encapsulate global graph structure at multiple levels of granularity. Many of them, e.g., degree distribution, and clustering coefficients, only encode 1-hop or 2-hop information. 
Such information can be learned by a shallow GNN through message passing, rendering such features less useful as a prior. 
Features such as average path length and diameter are too coarse-scale (scalar-valued) and do not help a GNN to discern the nodes. 
Since existing robust graph features can not encode both local and global topological information at multiple scales, we introduce local and global topology encodings based on persistent homology as representations to the GNNs (Section~\ref{sec:wgtl}). We also propose to use a topological loss as a regularizer to learn topological features better (Section~\ref{sec:tloss}).

\setlength{\textfloatsep}{4pt}
\begin{figure}
    \centering 
\includegraphics[width=0.47\textwidth]{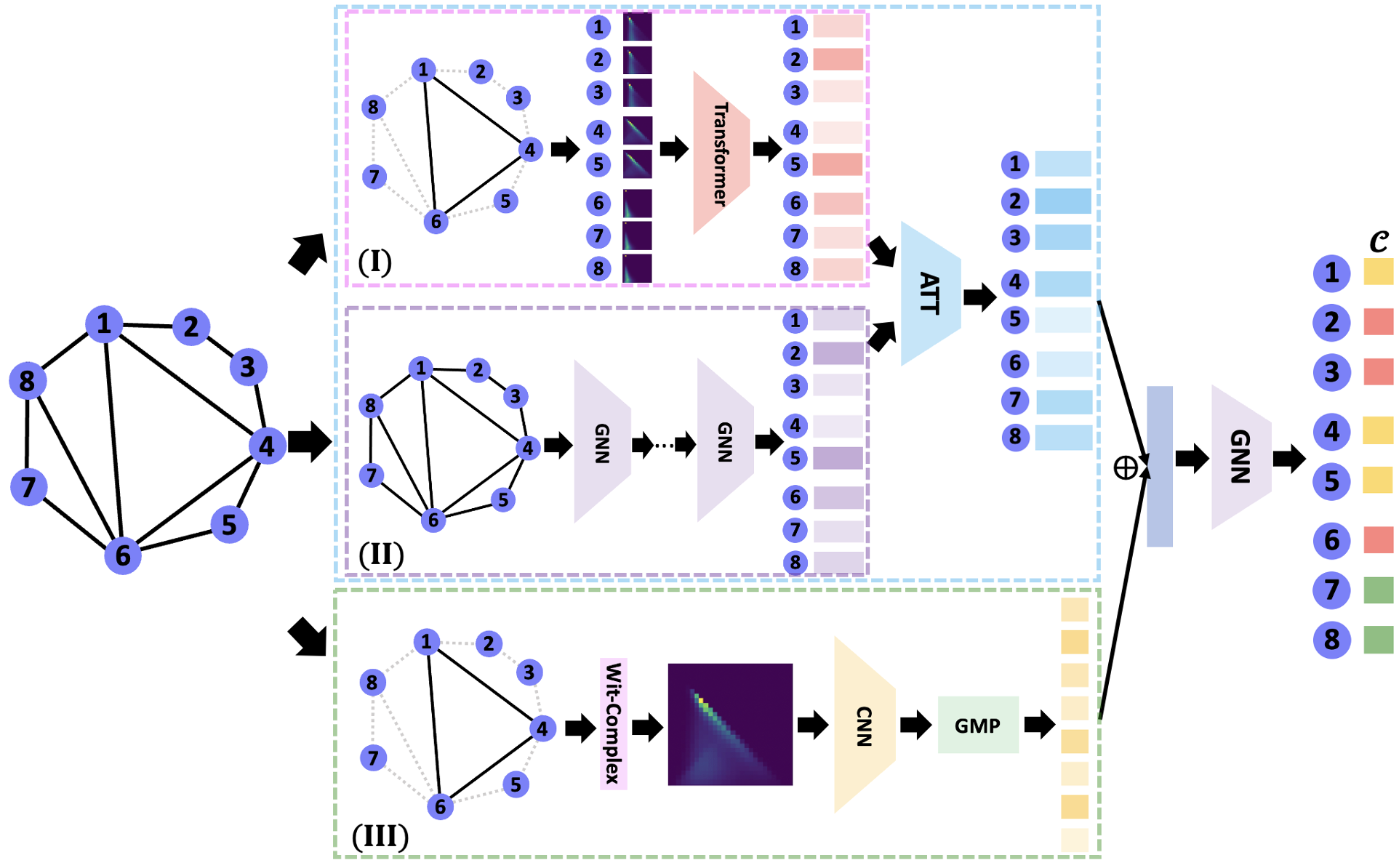}
    \caption{Architecture of Witness Graph Topological Layer.}\label{fig:wgtl}
\end{figure}

\subsection{Witness Graph Topological Layer}\label{sec:wgtl}
Now, we describe the architecture of the Witness Graph Topological Layer (WGTL) 
(Figure~\ref{fig:wgtl}).

\noindent\textbf{Component I: Local Topology Encoding.} 
\emph{Local topology encoding} component of WGTL (Figure~\ref{fig:wgtl}) computes local topological features of every node in three steps. 
First, we choose a landmark set $\mathfrak{L}$ from the input graph $\mathcal{G}$.
An important hyperparameter of the local topology encoding is the choice of the number of landmarks. Choosing too few landmarks would reduce the informativeness of the latent embedding. Choosing too many landmarks (i.e., $\left | \mathcal{V} \right |$), on top of being computationally expensive, might be redundant because the topological features of a neighboring node are likely to be the same. 
Secondly, we use the landmarks to construct an $\epsilon$-net of $\mathcal{G}$~\citep{arafat2020epsilon}, i.e. a set of subgraphs $\{\mathcal{G}^{\epsilon}_l\}_{l \in \mathfrak{L}}$. Here, $\epsilon\triangleq \max_{l_1, l_2 \in \landmarks} 0.5 d_{\mathcal{G}}(l_1, l_2)$.
We compute witness complex for each of these $\mathcal{G}^{\epsilon}_l$'s, and the corresponding persistence images $\mathrm{PI}(\witness(\mathcal{G}^{\epsilon}_l))$.
Finally, we attribute the PIs of the landmarks to each node in its $\epsilon$-cover and pass them through a vision transformer model to compute the local topology encoding, i.e. $\boldsymbol{Z}_{T_L} = \mathrm{Transformer}(\mathrm{PI}(\witness(\mathcal{G}^{\epsilon}))_1, \dots, \mathrm{PI}(\witness(\mathcal{G}^{\epsilon}))_N)$. The local topology encoding $\boldsymbol{Z}_{T_L}$ is a latent embedding of local topological features of each node in $\graph$.

When the attack model poisons the adjacency matrix, especially in the cases of global attacks, the local topological encodings are also implicitly perturbed. In Theorem~\ref{thm:stab_local}, we show that local topological encodings are stable w.r.t. perturbations in the input graph. Specifically, if an attacker's budget is $\bigO(\delta)$, the encoded local topology is perturbed by $\bigO( C_{\epsilon} (\delta+ \epsilon))$. The bound indicates the trade-off due to landmark selection. If we select fewer landmarks, computation becomes faster and we encode topological features of a larger neighborhood. However, an increase in $C_{\epsilon}$ yields less stable encoding. Whereas if we select more landmarks, we get more stable encoding but we lose informativeness of the local region and computational efficiency.

\begin{theorem}[Stability of the encoded local topology]\label{thm:stab_local}
Let us denote the persistence diagram obtained from local topology encoding of $\graph$ as $\mathrm{T}(\graph)$ (Figure~\ref{fig:schematic}). For any $p < \infty$ and $C_{\epsilon}$ being the maximum cardinality of the $\epsilon$-neighborhood created by the landmarks, we obtain that for any graph perturbation $\|\mathcal{G}-\mathcal{G}'\|_1 = \bigO(\delta)$ the final persistence diagram representation changes by
$W_p(\mathrm{T}(\mathcal{G}), \mathrm{T}(\mathcal{G}')) = \bigO( C_{\epsilon} \delta)$, if we have access to C\v{e}ch simplicial complexes, and $W_p(\mathrm{T}(\mathcal{G}), \mathrm{T}(\mathcal{G}')) = \bigO( C_{\epsilon} (\delta + \epsilon)),$ if Witness complex is used for the Local Persistence Images.
\end{theorem}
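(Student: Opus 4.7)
My plan is to combine the classical Wasserstein stability theorem for persistence diagrams with the approximation properties of the weak witness complex, propagating the edge perturbation additively across the $\epsilon$-net of landmarks. First, I will localize the perturbation: for every $l \in \landmarks$, I would examine $\mathcal{G}^{\epsilon}_l$ and its perturbed counterpart $(\mathcal{G}^{\epsilon}_l)'$. Since $\|\mathcal{G}-\mathcal{G}'\|_1 = \bigO(\delta)$, the modified edges can intersect at most $\bigO(C_{\epsilon})$ of the landmark $\epsilon$-covers (each edge lies in only a bounded number of covers, by the definition of $C_{\epsilon}$). Within each affected neighborhood, the bounded-diameter property $\mathrm{Diam}(\mathcal{G}^{\epsilon}_l) \leq 2\epsilon$ together with the small number of edge edits should let me bound the change in the induced geodesic metric by $\bigO(\delta)$, yielding an additive interleaving of the two graph filtrations at scale $\bigO(\delta)$.

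Next, I will invoke per-landmark stability. For the \v{C}ech variant, applying the $W_p$-stability theorem of Cohen--Steiner--Edelsbrunner--Harer--Mileyko to each local \v{C}ech filtration gives
\[
W_p\bigl(\mathrm{PD}(\check{C}(\mathcal{G}^{\epsilon}_l)),\, \mathrm{PD}(\check{C}((\mathcal{G}^{\epsilon}_l)'))\bigr) = \bigO(\delta).
\]
Because the aggregated local encoding $\mathrm{T}(\mathcal{G})$ is a (weighted) union of the per-landmark diagrams, applying the triangle inequality for $W_p$ over the $\bigO(C_{\epsilon})$ affected neighborhoods delivers the claimed \v{C}ech bound $\bigO(C_{\epsilon}\delta)$.

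For the witness case, I will additionally use the stability of the weak witness complex under landmark sampling (de Silva--Carlsson; \citet{arafat2020epsilon}): the persistence diagram of $\witness(\landmarks \cap \mathcal{G}^{\epsilon}_l, \mathcal{G}^{\epsilon}_l)$ interleaves with the \v{C}ech diagram of $\mathcal{G}^{\epsilon}_l$ at scale $\bigO(\epsilon)$. Composing this $\epsilon$-interleaving with the $\bigO(\delta)$ perturbation interleaving from the first step produces a combined interleaving of scale $\bigO(\delta+\epsilon)$ per landmark, and the same landmark-wise aggregation then gives $W_p(\mathrm{T}(\mathcal{G}), \mathrm{T}(\mathcal{G}')) = \bigO(C_{\epsilon}(\delta+\epsilon))$, matching the second claim.

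The hard part will be the first step: controlling how much the intrinsic shortest-path distances inside each $\mathcal{G}^{\epsilon}_l$ can inflate when an adversary removes shortcut edges. A single deleted shortcut could, in principle, create a detour much longer than $\delta$, so I will need to exploit the bounded diameter of the $\epsilon$-cover and argue that, within the filtration range of interest, any such detour still lies inside the same landmark neighborhood and contributes only $\bigO(\delta)$ to the resulting filtration distortion. Once this localized geodesic interleaving is established, the per-landmark stability, the aggregation over landmarks, and the composition with the witness approximation error $\epsilon$ become essentially bookkeeping with the triangle inequality for Wasserstein distances.
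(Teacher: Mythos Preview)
Your plan rests on a structural misreading of what $\mathrm{T}(\mathcal{G})$ is. It is \emph{not} a (weighted) union of the per-landmark persistence diagrams on which you could run a Wasserstein triangle inequality directly. In the paper's pipeline (see the notation table and Figure~\ref{fig:schematic}), the local persistence \emph{images} $\mathrm{PI}(\witness(\mathcal{G}^{\epsilon}_l))$ are attributed to nodes, passed through the transformer to form $\boldsymbol{Z}_{T_L}$, from which an auxiliary adjacency matrix $A(\mathcal{G})$ is built; $\mathrm{T}(\mathcal{G})$ is then the persistence diagram of \emph{that} auxiliary graph. Consequently the paper's argument is two-layered: first apply diagram stability at the outer level to get $W_p(\mathrm{T}(\mathcal{G}),\mathrm{T}(\mathcal{G}')) \leq \|A(\mathcal{G})-A(\mathcal{G}')\|_1$, then bound $\|A(\mathcal{G})-A(\mathcal{G}')\|_1$ by summing, over all landmarks, the $L^1$ differences of the local persistence \emph{images} (invoking the PI stability theorem with constant $C_\sigma$), and only then apply per-landmark diagram stability. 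Your proposal skips the persistence-image layer and the auxiliary-graph layer entirely, so the aggregation step ``triangle inequality for $W_p$ over the $\bigO(C_\epsilon)$ affected neighborhoods'' does not correspond to any step that would actually bound $W_p(\mathrm{T}(\mathcal{G}),\mathrm{T}(\mathcal{G}'))$.

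A second gap is your use of $C_\epsilon$. You argue that a perturbed edge lies in at most $\bigO(C_\epsilon)$ covers, but $C_\epsilon$ is defined as the \emph{maximum cardinality} of an $\epsilon$-neighborhood, not the multiplicity of cover overlap; these are unrelated in general. In the paper, the factor $C_\epsilon$ enters differently: the sum $\sum_l |\mathcal{G}^\epsilon_l|\,\|\mathcal{G}^\epsilon_l-\mathcal{G}'^\epsilon_l\|_1$ is bounded by $\max_l |\mathcal{G}^\epsilon_l|\cdot \|\mathcal{G}-\mathcal{G}'\|_1 = C_\epsilon\,\|\mathcal{G}-\mathcal{G}'\|_1$, i.e.\ $C_\epsilon$ is the weight each landmark contributes (because its PI is copied to all $|\mathcal{G}^\epsilon_l|$ nodes in its cover), not a count of affected neighborhoods. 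Finally, for the witness case the paper does not interleave with \v{C}ech but uses the Vietoris--Rips/witness $3$-approximation of \citet{arafat2020epsilon} together with a triangle inequality at the $W_\infty$ level, which is where the additive $\epsilon$ term arises. Your ``hard part'' about geodesic inflation under edge deletion is not addressed by the paper at all; its bound operates at the level of adjacency perturbations $\|\mathcal{G}^\epsilon_l-\mathcal{G}'^\epsilon_l\|_\infty$, not intrinsic shortest-path metrics.
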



\textbf{Component II: Graph Representation Learning.} The component II of WGTL deploys in cascade $M$ GNN layers with ReLU activation function and weights $\{\boldsymbol{\Theta}^{(m)}\}_{m=1}^M$. The representation learned at the $m$-th layer is given by  ${\boldsymbol{Z}_{G}^{(m +1)}} =\mathrm{ReLU}(\widetilde{\boldsymbol{{D}}}^{-\frac{1}{2}}\widetilde{\boldsymbol{{A}}}\widetilde{\boldsymbol{{D}}}^{\frac{1}{2}} \boldsymbol{Z}_{G}^{(m)}\boldsymbol{\Theta}^{(m)}).$
Here, $Z_G^{(0)} = \mathcal{G}$, $\widetilde{\boldsymbol{{A}}} = \boldsymbol{A} + \boldsymbol{I}$, 
and $\widetilde{\boldsymbol{{{D}}}}$ is the corresponding degree matrix.

\noindent\textbf{Component III: Global Topology Encoding.} The \textit{global topological encoding} represents the global witness complex-based topological features of a graph (Component III in Figure~\ref{fig:wgtl}). 
First, we use the input adjacency matrix to compute the lengths of all-pair shortest paths (geodesics) among the nodes. The topological space represented by the geodesic distance matrix is used to compute the global witness complex-based persistence image $\mathrm{PI}(\witness(\mathcal{G}))$ of the graph~\citep{arafat2020epsilon}. 
Finally, the persistence image representation is encoded by a Convolutional Neural Network (CNN)-based model to obtain the \textit{global topological encoding} $\boldsymbol{Z}_{T_G} \triangleq \xi_{\text{max}}(\mathrm{CNN}(\mathrm{PI}(\witness(\mathcal{G})))$. Here, $\xi_{\text{max}}(\cdot)$ denotes global max-pooling operation. The global topology encoding encapsulates the global topological features, such as the equivalent class of connected nodes, cycles, and voids in the graph. 

\begin{figure}[t!]
    \centering
    \includegraphics[width=\columnwidth]{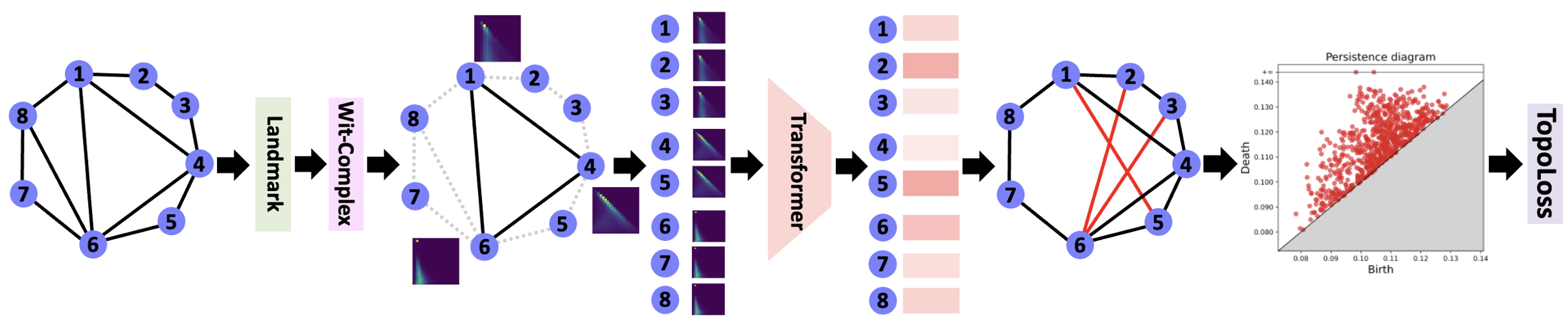}
    \caption{Illustration of Witness Complex-based topological regularizer $L_{Topo}$.}\label{fig:schematic}\vspace*{-.5em}
\end{figure}

The stability of global persistence diagram representation is a well-known classical result in persistence homology~\citep{stability2005,stability2008}. However, given an attacker's budget of $\delta$, the stability of the encoded global topology is an important result for the practical purposes of this paper. Theorem~\ref{thm:stab_global} shows that under a $\bigO(\delta)$ perturbation of the input graph, the global topology encoding is perturbed by $\bigO(\delta+\epsilon)$. 
Hence, the global topological encoding inherits the robustness property of persistent homology and induces robust learning under adversarial attacks.

\begin{proposition}[Stability of the encoded global topology]\label{thm:stab_global}
If the landmarks selected for the witness complex induce an $\epsilon$-net of the graph with $\epsilon > 0$, we obtain that for any graph perturbation $\|\mathcal{G}-\mathcal{G}'\|_1 = \bigO(\delta)$ the global persistence image representation changes by
   $\|\mathrm{PI}(\witness^{\mathrm{glob}}(\mathcal{G})) - \mathrm{PI}(\witness^{\mathrm{glob}}(\mathcal{G}'))\|_{\infty} = \bigO( \delta + \epsilon),$
and it reduces to $\bigO( \delta)$, if we have access to the C\v{e}ch simplicial complexes for $\mathcal{G}$.
\end{proposition}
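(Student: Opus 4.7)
The plan is to bound the $L^\infty$ difference of the two global persistence images by inserting three intermediate quantities and chaining stability results. Concretely, I would compare $\mathrm{PI}(\witness^{\mathrm{glob}}(\mathcal{G}))$ first to $\mathrm{PI}(\check{C}(\mathcal{G}))$, then to $\mathrm{PI}(\check{C}(\mathcal{G}'))$, and finally to $\mathrm{PI}(\witness^{\mathrm{glob}}(\mathcal{G}'))$, with the middle step absorbing the adversarial budget $\delta$ and the two outer steps paying the $\epsilon$ price of using witnesses instead of Čech. When Čech complexes are assumed available, the outer steps vanish and only the middle step remains, yielding the promised $\bigO(\delta)$ bound.

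First, I would translate the graph-level perturbation $\|\mathcal{G}-\mathcal{G}'\|_1=\bigO(\delta)$ into a perturbation of the induced geodesic metric: for the weighted-graph filtration values $\alpha$ used in constructing $\witness^{\mathrm{glob}}$, one shows $\|d_{\mathcal{G}}-d_{\mathcal{G}'}\|_{\infty}=\bigO(\delta)$ on the common vertex set (entries only increase when cutting edges on a shortest path and only decrease when adding shortcuts, and in either case by at most the weight of the affected edges in the symmetric difference). With this in hand, the classical persistence stability theorem of Cohen-Steiner--Edelsbrunner--Harer gives the bottleneck bound $d_B(\mathrm{PD}(\check{C}(\mathcal{G})),\mathrm{PD}(\check{C}(\mathcal{G}')))=\bigO(\delta)$, which is the central ingredient of the middle step.

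Second, I would invoke the witness-to-Čech approximation guarantee (the $\epsilon$-net result from the cited work of Arafat et al.): whenever the landmark set $\mathfrak{L}$ forms an $\epsilon$-net of $(\mathcal{V},d_{\mathcal{G}})$, the bottleneck distance $d_B(\mathrm{PD}(\witness^{\mathrm{glob}}(\mathcal{G})),\mathrm{PD}(\check{C}(\mathcal{G})))=\bigO(\epsilon)$, and analogously for $\mathcal{G}'$. Then I would use the Lipschitz stability of persistence images with respect to bottleneck (equivalently, $1$-Wasserstein) distance for a bounded-weight, finite-bandwidth Gaussian kernel to convert each of the three bottleneck bounds into an $L^\infty$ bound on the corresponding persistence images. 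The triangle inequality then gives
\[
\|\mathrm{PI}(\witness^{\mathrm{glob}}(\mathcal{G})) - \mathrm{PI}(\witness^{\mathrm{glob}}(\mathcal{G}'))\|_{\infty}
\;\leq\; \bigO(\epsilon) + \bigO(\delta) + \bigO(\epsilon) \;=\; \bigO(\delta+\epsilon),
\]
with the Čech variant following by dropping the two $\epsilon$-terms.

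The main obstacle I anticipate is the first translation step, i.e.\ ensuring that an $L^1$ perturbation of the adjacency matrix controls geodesic distances in the sense required by the stability theorem: a single deleted bridge-edge on a shortest path can, in principle, inflate a geodesic by much more than $\delta$. I would therefore work in the edge-weight filtration where the filtration value of each simplex is its longest constituent edge weight, so that an edge present only in $\mathcal{G}$ affects the filtration by its own weight (not by the detour it induces), and carefully verify that the weight perturbations of the filtration functions $f_{\mathcal{G}},f_{\mathcal{G}'}:\check{C}(\mathcal{V})\to\mathbb{R}$ satisfy $\|f_{\mathcal{G}}-f_{\mathcal{G}'}\|_{\infty}=\bigO(\delta)$ on the relevant simplices. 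Once this is justified, the remaining ingredients (CEH stability, $\epsilon$-net witness approximation, PI Lipschitz property) plug in directly; the proof is then a clean triangle-inequality chain with transparent constants.
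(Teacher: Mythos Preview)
Your proposal is correct and follows essentially the same strategy as the paper: bound the $L^\infty$ difference of persistence images by a Wasserstein/bottleneck distance between persistence diagrams, then apply a triangle-inequality chain routing through an intermediate complex to separate the $\bigO(\epsilon)$ witness-approximation cost from the $\bigO(\delta)$ perturbation cost. The only cosmetic difference is that the paper routes through the Vietoris--Rips complex (since the cited $\epsilon$-net result of Arafat et al.\ is stated as a Witness--VR interleaving, giving $W_\infty(\mathrm{PD}(\witness),\mathrm{PD}(\mathrm{VR}))\leq 3\log 3+2\epsilon$) rather than through \v{C}ech, and applies the triangle inequality at the persistence-diagram level before converting once to persistence images; your concern about translating $\|\mathcal{G}-\mathcal{G}'\|_1$ into a sup-norm bound on the filtration function is well-founded and is in fact glossed over in the paper's proof, which simply invokes $W_\infty(\mathrm{PD}(\mathrm{VR}(\mathcal{G})),\mathrm{PD}(\mathrm{VR}(\mathcal{G}')))\leq\|\mathcal{G}-\mathcal{G}'\|_\infty$ without elaboration.
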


\noindent\textbf{WGTL: Aggregating Global and Local Encodings.}
We can aggregate the local and global topology encodings with the latent embedding of graph convolution layers in different ways. Figure~\ref{fig:wgtl} shows the approach that empirically provides the most effective defense against adversarial attacks. 

The aggregation of the three encodings is computed in two steps.
First, to adaptively learn the intrinsic dependencies between learned node embedding and latent local topological encodings, we utilize the attention mechanism to focus on the importance of task-relevant components in the learned representations, i.e. $(\alpha_{G}, \alpha_{T_L}) \triangleq \mathrm{Att}(\boldsymbol{Z}_H, \boldsymbol{Z}_{T_L})$. In practice, we compute attention coefficients as $ \alpha_i = \text{softmax}_i(\Upsilon_{\text{Att}} \tanh{(\boldsymbol{\Xi} \boldsymbol{Z}_i)})$,
where $\Upsilon_{\text{Att}} \in \mathbb{R}^{1 \times d_{\text{out}}}$ is a linear transformation, $\boldsymbol{\Xi}$ is the trainable weight matrix, and the softmax function is used to normalize the attention vector. Then, we obtain the final embedding by combining two embeddings  
${\boldsymbol{Z}}_{\mathrm{AGG}} = \alpha_G \times \boldsymbol{Z}_G + \alpha_{T_L} \times \boldsymbol{Z}_{T_L}.$
Finally, we combine the learned embedding ${\boldsymbol{Z}}_{\mathrm{AGG}}$ with the latent global topological representation ${\boldsymbol{Z}}_{T_G}$, such that 
    $\boldsymbol{Z}_{\mathrm{WGTL}} = {\boldsymbol{Z}}_{\mathrm{AGG}}\boldsymbol{Z}_{T_G}$.
The node representation ${\boldsymbol{Z}}_{\mathrm{WGTL}}$ encapsulates both global and local topology priors. We call ${\boldsymbol{Z}}_{\mathrm{WGTL}}$ the \emph{aggregated topological priors}.
We feed $\boldsymbol{Z}_{\mathrm{WGTL}}$ into a graph convolutional layer and use a differentiable classifier (here we use a softmax layer) to make node classification.
In the following, we show the stability of the aggregated topological priors.
\begin{proposition}[Stability of the aggregated topological encoding]\label{thm:stab_WGTL}
If the landmarks selected for the witness complex induce an $\epsilon$-net of the graph with $\epsilon > 0$ and $L_{\mathrm{GNN}}$ is the Lipschitz constant of the GNNs in Component II, then for a perturbation $\|\mathcal{G}-\mathcal{G}'\|_1 = \bigO(\delta)$, the encoding ${\boldsymbol{Z}}_{\mathrm{WGTL}}$ changes by
{\small{
\begin{align*}
    \| {\boldsymbol{Z}}_{\mathrm{WGTL}} (\mathcal{G}) - {\boldsymbol{Z}}_{\mathrm{WGTL}} (\mathcal{G}') \|_{1} = 
    \bigO( (C_{\epsilon} + L_{\mathrm{GNN}}) (\delta + \epsilon)^2).
\end{align*}
}}
\end{proposition}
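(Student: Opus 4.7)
The plan is to decompose $Z_{\mathrm{WGTL}} = Z_{\mathrm{AGG}}\, Z_{T_G}$ into its two tensor factors, and propagate the stability bounds of Theorem~\ref{thm:stab_local} and Proposition~\ref{thm:stab_global} through the aggregation and product operations. Concretely, I would first use the telescoping identity
\begin{align*}
Z_{\mathrm{WGTL}}(\mathcal{G}) - Z_{\mathrm{WGTL}}(\mathcal{G}')
&= \bigl(Z_{\mathrm{AGG}}(\mathcal{G}) - Z_{\mathrm{AGG}}(\mathcal{G}')\bigr)\, Z_{T_G}(\mathcal{G}) \\
&\quad + Z_{\mathrm{AGG}}(\mathcal{G}')\,\bigl(Z_{T_G}(\mathcal{G}) - Z_{T_G}(\mathcal{G}')\bigr),
\end{align*}
together with sub-multiplicativity $\|AB\|_1 \leq \|A\|_1\|B\|_1$, to reduce the claim to individually controlling the two factors.

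Next, I would bound $\|Z_{T_G}(\mathcal{G}) - Z_{T_G}(\mathcal{G}')\|_1$ directly through Proposition~\ref{thm:stab_global}, combined with the Lipschitzness of the CNN and global max-pooling $\xi_{\max}$, yielding $\bigO(\delta+\epsilon)$. For the aggregation factor, I would decompose $\alpha_G Z_G + \alpha_{T_L} Z_{T_L}$ by triangle inequality: the GNN embedding contribution is $\bigO(L_{\mathrm{GNN}}\,\delta)$ by hypothesis, while the local encoding contribution is $\bigO(C_{\epsilon}(\delta+\epsilon))$ by Theorem~\ref{thm:stab_local} composed with the Lipschitzness of the vision transformer used to produce $Z_{T_L}$. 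Because the attention coefficients $(\alpha_G, \alpha_{T_L})$ are outputs of a softmax applied to a smooth function of $(Z_H, Z_{T_L})$, they are locally Lipschitz in the same inputs, so their perturbation is absorbed into a constant. Summing the two contributions gives $\|Z_{\mathrm{AGG}}(\mathcal{G}) - Z_{\mathrm{AGG}}(\mathcal{G}')\|_1 = \bigO\bigl((C_{\epsilon}+L_{\mathrm{GNN}})(\delta+\epsilon)\bigr)$.

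The final step is to plug these two bounds into the telescoping identity. Here I expect the $(\delta+\epsilon)^2$ scaling to emerge from the interaction term $\|Z_{\mathrm{AGG}}(\mathcal{G})-Z_{\mathrm{AGG}}(\mathcal{G}')\|_1 \cdot \|Z_{T_G}(\mathcal{G})-Z_{T_G}(\mathcal{G}')\|_1$, i.e.\ by upper-bounding $\|Z_{T_G}(\mathcal{G})\|_1$ and $\|Z_{\mathrm{AGG}}(\mathcal{G}')\|_1$ not by crude constants but by their own perturbation-scale bounds $\bigO(\delta+\epsilon)$, valid in the small-budget regime where the unperturbed baseline is a fixed reference point and only the attack-induced deviation contributes to the effective magnitude. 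Multiplying an $\bigO((C_{\epsilon}+L_{\mathrm{GNN}})(\delta+\epsilon))$ perturbation against an $\bigO(\delta+\epsilon)$ factor then produces the claimed $\bigO\bigl((C_{\epsilon}+L_{\mathrm{GNN}})(\delta+\epsilon)^2\bigr)$ bound.

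The main obstacle will be justifying this quadratic scaling: a naive application of the product rule only delivers a linear rate in $(\delta+\epsilon)$, so one has to argue that each tensor factor enters the product multiplicatively on the perturbation scale rather than additively, either by centring the encodings around a baseline or by exploiting the bilinear structure of the aggregation output. A secondary, more routine difficulty is handling the attention block, where the softmax Lipschitz constant depends on the range of its pre-activations; restricting to a bounded input region (as is standard under a fixed attacker budget) lets this constant be absorbed into the $\bigO$-notation without altering the final rate.
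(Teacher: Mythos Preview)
Your component-level analysis matches the paper's exactly: you decompose into the local encoding, the GNN output, and the global encoding, and you invoke Theorem~\ref{thm:stab_local}, the Lipschitz constant $L_{\mathrm{GNN}}$, and Proposition~\ref{thm:stab_global} to get bounds of order $C_{\epsilon}(\delta+\epsilon)$, $L_{\mathrm{GNN}}\delta$, and $(\delta+\epsilon)$ respectively. The paper does precisely this in its Step~2.

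Where you diverge is in Step~3, the product. You telescope $Z_{\mathrm{AGG}}Z_{T_G} - Z_{\mathrm{AGG}}'Z_{T_G}'$ into two cross-terms and then, as you correctly flag, struggle to recover the quadratic $(\delta+\epsilon)^2$ from what is naturally a first-order decomposition; your proposed fix (bounding the unperturbed factors $\|Z_{T_G}(\mathcal{G})\|$ and $\|Z_{\mathrm{AGG}}(\mathcal{G}')\|$ themselves by $\bigO(\delta+\epsilon)$) is the weakest link and would need a centring argument you only sketch. The paper sidesteps the telescoping entirely: it bounds the difference of the product directly by the \emph{product of the differences},
\[
\|XY - X'Y'\|_1 \;\leq\; \|X-X'\|_1\,\|Y-Y'\|_{\infty},
\]
citing H\"older's inequality for this step, and then simply multiplies the Step~2 bounds together to obtain $\bigO\bigl((C_{\epsilon}(\delta+\epsilon)+L_{\mathrm{GNN}}\delta)(\delta+\epsilon)\bigr) = \bigO\bigl((C_{\epsilon}+L_{\mathrm{GNN}})(\delta+\epsilon)^2\bigr)$. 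So the paper's route is shorter and yields the quadratic rate in one line, whereas your telescoping route is the textbook bilinear-perturbation argument but forces you to confront exactly the obstacle you name. Your instinct that the bilinear structure is what produces the square is correct; the paper just encodes that structure in a single multiplicative inequality rather than unpacking it through telescoping.
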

Proposition~\ref{thm:stab_WGTL} shows that the final representations computed by WGTL is stable under adversarial attacks. The stability depends on the approximation trade-off induced by the landmark set and the Lipschitz stability of the GNN layers~\citep{gcnLips}.

\subsection{Topological Loss as a Regularizer}\label{sec:tloss}
In Section~\ref{sec:wgtl}, we propose using the aggregated topology encodings to predict node labels for downstream node classification tasks through a GNN backbone. In this case, we use a supervised loss $L_{supv}$ that facilitate learning the aggregated topology priors for classification. We empirically observe that our topology encoding already provides a certain degree of robustness (see ablation studies in Appendix~\ref{app:ablation}).

However, the supervised loss function only explicitly enforces misclassification constraints on the defense model. It does not explicitly enforce any topological constraint such that the topological encodings themselves iteratively become more robust while training. Hence, for increased robustness, we propose to use topological loss $L_{topo}$ that explicitly encodes the birth and death of the topological features in the auxiliary graph (ref. Figure~\ref{fig:schematic}) reconstructed from the transformer output. Specifically, 
\begin{align}\label{eq:topo_loss}
    L_{topo, k}(\mathrm{T}(\mathcal{G})) \triangleq \sum_{i=1}^m (d_i - b_i)^p \left(\frac{d_i+b_i}{2} \right)^{q},
\end{align}
where $m$ is the number of points in the persistence diagram of the auxiliary graph reconstructed from the transformer output and $k=\max\{p,q\}$. In practice, we use $k=2$.
Use of such topological loss was first proposed for image segmentation~\citep{hu2019topology}. \citet{topolayer20} uses it as a regularizer in designing GAN and adversarial attacks on images.
In contrast, we use it to induce stability in the encoding and to defend against adversarial attacks. The benefits of using the topological loss are two-fold:

\noindent (i) \textbf{Persistent and Stable Feature Selection:} Minimising $L_{topo, k}$ causes removal of topological features with smaller persistence, i.e., $(d_i - b_i)$. As such, the regularizer acts as a sparsity-inducing feature selector. By minimizing $L_{topo}$, we are training to learn latent representation such that only the most persistent features remain in the encoded local topology. Such features are known to be more stable and represent more robust structures of the graph.

\noindent (ii) \textbf{Robustness to Local Perturbations:} A localized attack perturbing certain nodes or edges is expected to appear as topological noise in the final persistent diagram, and should exhibit lower persistence.
Since minimizing $L_{topo}$ forces the local topology encodings to eliminate features with small persistences, using $L_{topo}$ as a regularizer with $L_{supv}$ induces robustness to local perturbations in final classification tasks.

Proposition~\ref{prop:stab_tloss} quantifies the stability of the topological regularizer $L_{topo, k}$ under any attack with perturbation budget $\bigO(\delta)$. Specifically, it shows that the stability depends on a trade-off between the maximum persistence of the final graph representation, $A_{\Phi}(\mathcal{G})$, in Figure~\ref{fig:schematic}, and the number of non-zero persistent features in the final encoding. Hence, it reflects our discussion above.

\begin{proposition}[Stability of $L_{topo}$]\label{prop:stab_tloss}
Let us assume that the cardinality of any $\epsilon$-neighborhood of $\graph$ grows polynomially, i.e. $C_{\epsilon} = \bigO(\epsilon^{-M})$ for an $M > 0$.
If $m$ is the number of points in the persistence diagram, $2k= 2\max\{p,q\} > M$, and $A(\graph)$ is the auxiliary graph constructed from the local topology encodings (Fig.~\ref{fig:schematic}), $L_{topo, k}(\mathrm{T}(\mathcal{G}))$ is stable w.r.t. a perturbation of $\mathcal{G}$, i.e. $\| \mathcal{G} - \mathcal{G'}\|_1 = \delta$.
\begin{align*}
    &\big| L_{topo, k}(\mathrm{T}(\mathcal{G})) - L_{topo, k}(\mathrm{T}(\mathcal{G'}))\big|\\
    =~&\bigO\left(\left({\epsilon}^{-4kM}\mathrm{Diam}(A(\mathcal{G})) + m {\epsilon}^{-2k} \mathrm{Diam}(\mathcal{G})^{2k}\right) \delta \right).
\end{align*}\vspace*{-1.5em}
\end{proposition}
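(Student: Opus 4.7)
The plan is to reduce the stability of $L_{topo,k}$ to the Wasserstein stability of the persistence diagram $\mathrm{T}(\mathcal{G})$ already established in Theorem~\ref{thm:stab_local}. Writing each summand as $\phi(b,d) = (d-b)^p\bigl((d+b)/2\bigr)^q$ with $p+q \le 2k$, I note that $\phi$ vanishes on the diagonal $\Delta$, is polynomial of total degree at most $2k$, and is bounded on the box $[0,T]^2$ with $T = \mathrm{Diam}(A(\mathcal{G}))$. Fixing an optimal $W_{2k}$-matching $\gamma$ between $\mathrm{T}(\mathcal{G})$ and $\mathrm{T}(\mathcal{G}')$ (with unmatched points projected to $\Delta$), the key starting inequality is
\[
\bigl|L_{topo,k}(\mathrm{T}(\mathcal{G})) - L_{topo,k}(\mathrm{T}(\mathcal{G}'))\bigr| \;\le\; \sum_i \bigl|\phi(x_i) - \phi(\gamma(x_i))\bigr|,
\]
which I would then split into contributions from (i) matched off-diagonal pairs and (ii) pairs matched to the diagonal, bounding each regime with a different estimate on $\phi$.

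For regime (i) I would use a Lipschitz bound: a direct computation of $\nabla\phi$ on $[0,T]^2$ yields $\mathrm{Lip}(\phi) = \bigO(k\,T^{2k-1})$, so each matched pair contributes at most $\bigO(\mathrm{Diam}(\mathcal{G})^{2k-1}\,\|x_i-\gamma(x_i)\|_\infty)$ after absorbing $\mathrm{Diam}(A(\mathcal{G})) \le \mathrm{Diam}(\mathcal{G})$. Summing over at most $m$ matched pairs and using the bottleneck specialization $W_\infty \le W_{2k} = \bigO(C_\epsilon(\delta+\epsilon))$ from Theorem~\ref{thm:stab_local}, then substituting $C_\epsilon = \bigO(\epsilon^{-M})$ and $(\delta+\epsilon) \le \mathrm{Diam}(\mathcal{G})$ to extract one clean factor of $\delta$, produces a regime-(i) contribution of order $m\,\epsilon^{-2k}\,\mathrm{Diam}(\mathcal{G})^{2k}\,\delta$. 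This matches the second term of the claimed bound.

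For regime (ii) I would use the sharper near-diagonal estimate $\phi(x) \le 2^p\,\mathrm{pers}(x)^p\,T^q$, so that the total diagonal-match contribution is controlled by a change in total $2k$-persistence. Applying the classical total-persistence stability result (Cohen--Steiner--Edelsbrunner--Harer--Mileyko) together with the bound on $W_{2k}^{2k}$ obtained by raising Theorem~\ref{thm:stab_local} to the $2k$-th power, and applying $(\delta+\epsilon)^{2k} = \bigO(\mathrm{Diam}(\mathcal{G})^{2k-1}\delta)$ to extract a single linear factor of $\delta$, together with the polynomial-growth substitution $C_\epsilon = \bigO(\epsilon^{-M})$, produces a contribution of order $\epsilon^{-4kM}\,\mathrm{Diam}(A(\mathcal{G}))\,\delta$; the extra factor of $C_\epsilon^{2k}$ beyond what raw $W_{2k}$ stability provides comes from bounding the number of topologically significant points in $\mathrm{T}(\mathcal{G})$ by $C_\epsilon^{2k}$ under the polynomial growth of $\epsilon$-neighborhoods. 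This matches the first term of the claimed bound.

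The main obstacle will be the bookkeeping between these two regimes: $\phi$ is neither uniformly Lipschitz (it has polynomial growth of degree $2k$ on $[0,T]^2$) nor uniformly small away from $\Delta$, so the split is unavoidable, and the hypothesis $2k > M$ enters precisely to keep the total $2k$-persistence sum finite under the polynomial-growth assumption (without it, the regime-(ii) contribution from short-persistence features would diverge). Summing the two regime bounds and collecting constants then yields the claimed $\bigO\bigl((\epsilon^{-4kM}\mathrm{Diam}(A(\mathcal{G})) + m\,\epsilon^{-2k}\,\mathrm{Diam}(\mathcal{G})^{2k})\,\delta\bigr)$ estimate.
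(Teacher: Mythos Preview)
Your route is genuinely different from the paper's. The paper does \emph{not} use a Wasserstein matching or a two-regime split at all. Instead it first proves, as a standalone result (Theorem~\ref{thm:bound_topoloss} in the appendix), that $L_{topo,k}(\mathrm{T}(\mathcal{G}))$ is \emph{bounded} by
\[
B(\mathcal{G}) \;=\; C_\sigma^{2k}\,\epsilon^{-2kM}\,\mathrm{Diam}(A(\mathcal{G})) \;+\; 2^{k-1}\,m\,\mathrm{Diam}(\mathcal{G})^{2k},
\]
via the decomposition $\phi(b,d)\le \tfrac12(d-b)^{2k}+2^{q-1}b^{2q}$, the Cohen--Steiner--Edelsbrunner--Harer--Mileyko total-persistence bound for the first sum, and the trivial bound $b_i\le \mathrm{Diam}(\mathcal{G})$ for the second. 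The stability proof is then a two-liner: invoke ``every bounded function is Lipschitz with the bound itself as Lipschitz constant'' to get $|L_{topo,k}(\mathrm{T}(\mathcal{G}))-L_{topo,k}(\mathrm{T}(\mathcal{G}'))|\le B(\mathcal{G})\,\|A(\mathcal{G})-A(\mathcal{G}')\|_\infty$, and then cite Equation~\eqref{eq:local_wit_adj} for $\|A(\mathcal{G})-A(\mathcal{G}')\|_\infty=O(C_\epsilon\|\mathcal{G}-\mathcal{G}'\|_1)$. Your approach is considerably more honest about what is actually being estimated (you track how $\phi$ varies under a matching rather than asserting a global Lipschitz constant from boundedness), at the cost of substantially more bookkeeping.

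That said, your extraction of a clean linear factor of $\delta$ does not work as written. In regime~(i) you have $m\,T^{2k-1}\,O(C_\epsilon(\delta+\epsilon))$, and ``$(\delta+\epsilon)\le\mathrm{Diam}(\mathcal{G})$'' does not extract a factor of $\delta$; it erases the $\delta$-dependence entirely. In regime~(ii) the claim $(\delta+\epsilon)^{2k}=O(\mathrm{Diam}(\mathcal{G})^{2k-1}\delta)$ is simply false (set $\delta=0$: the left side is $\epsilon^{2k}>0$, the right side is $0$). What you actually obtain from Theorem~\ref{thm:stab_local} is control by $(\delta+\epsilon)$, not $\delta$ alone. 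The paper hits the same wall but handles it by silently dropping the $+\epsilon$ when citing Equation~\eqref{eq:local_wit_adj}; so this is a defect you inherit from the stated result rather than one you introduced, but the specific algebraic moves you propose to bridge it are invalid and should be replaced by either an explicit $(\delta+\epsilon)$ bound or an appeal to the same convention the paper uses.
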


\section{Experimental Evaluation}
We evaluate the proposed WGTL on the node classification task for clean and attacked graphs across a range of perturbation rates.
We validate the proposed approach on six benchmark datasets: Citeseer, Cora, Pubmed, Polblogs, OGBN-Arxiv, and Snap-patents (ref. Table~\ref{tab:dataset_main}). 
\textit{We report the mean and standard deviation of accuracies over $10$ runs. The best performance is highlighted in bold while the best result on a dataset for a given perturbation rate is indicated by *.}
Note that, throughout our experiments, we use 0-dimensional topological features. All the hyperparameters are chosen by performing cross-validation.
%

We defer the dataset descriptions, implementation details, ablation studies, impact of the \#landmarks on performance, comparison with Vietoris-Rips, and additional experimental results such as handling node features, heterophilic graphs, adaptive attacks, and the adoption of other topological vectorization methods in WGTL to the Appendix. 
\setlength\tabcolsep{1.6pt}
\begin{table}[t]
\small
\centering
\begin{tabular}{ccccc}
\toprule
\textbf{Dataset} & \textbf{\begin{tabular}[c]{@{}c@{}}\#nodes (LCC)\end{tabular}} & \textbf{\begin{tabular}[c]{@{}c@{}}\#edges (LCC)\end{tabular}} & \textbf{\#classes} & \textbf{\#features} \\ \hline
Cora-ML          & 2,485                                                            & 5,069                                                            & 7                  & 1,433               \\
Citeseer         & 2,110                                                            & 3,668                                                            & 6                  & 3,703               \\
Pubmed           & 19,717                                                           & 44,338                                                           & 3                  & 500                 \\
Polblogs         & 1,222                                                            & 16,714                                                           & 2                  & None                \\ 
OGBN-arXiv & 169,343 & 1,157,799                                                          & 40                  & 128                \\ 
Snap-patents & 4562 & 12103 & 5 & 269 \\
\bottomrule
\end{tabular}
\caption{Dataset statistics: only the largest connected component (LCC) is considered.}
\label{tab:dataset_main}
\end{table}

\noindent\textbf{Landmark Selection for Local and Global Topology Encodings.} There are several approaches to selecting landmarks, e.g., random selection~\citep{witness}, max-min selection~\citep{witness}, $\epsilon$-net~\citep{arafat2020epsilon} based and centrality-based selection~\citep{topopooling}. In our experiments, we select landmarks based on degree centrality. As shown by~\citet{topopooling}, doing so helps to improve the classification performance. On Cora-ML, Citeseer, and Polblogs, we select 5\% nodes, on Pubmed and Snap-patents we select 2\% nodes and on OGBN-arXiv we select 0.05\% nodes as landmarks to keep \#landmarks roughly invariant across datasets.  

\begin{table}
\centering
\setlength\tabcolsep{2pt}
\resizebox{0.9\columnwidth}{!}{
\begin{tabular}{@{}cccccc@{}}
\toprule
\textbf{Dataset} & \textbf{Models} & \multicolumn{3}{c}{\textbf{Perturbation Rate}} &  \\ \midrule
 &  & 0\% & 5\% & 10\% \\ \midrule
 \multirow{6}{*}{\begin{tabular}[c]{@{}c@{}}Cora-ML\end{tabular}} 
 & Pro-GNN &{82.98$\pm$0.23} & 80.14$\pm$1.34 & 71.59$\pm$1.33 \\
 & Pro-GNN+WGTL & {\bf 83.85$\pm$0.38} & {\bf 81.90$\pm$0.73} & {\bf 72.51$\pm$0.76} \\ 
  \cmidrule(lr){2-5}
   & GCN+GNNGuard & 83.21$\pm$0.34 & 76.57$\pm$0.50 & 69.13$\pm$0.77  \\
 & GCN+GNNGuard+WGTL & $^*${\bf 84.78$\pm$0.43} & $^*${\bf 83.23$\pm$0.82} & $^*${\bf 79.96$\pm$0.49} \\ 
 \cmidrule(lr){2-5}
  & SimP-GCN & 79.52$\pm$1.81 & 74.75$\pm$1.40 & 70.87$\pm$1.70 \\
 & SimP-GCN+WGTL & \bf{81.49$\pm$0.52} &  \bf{ 76.65$\pm$0.65} &  \bf{72.88$\pm$0.83}  \\ 
 \midrule
  \multirow{6}{*}{\begin{tabular}[c]{@{}c@{}}Citeseer\end{tabular}} 
   & ProGNN & 72.34$\pm$0.99 & 68.96$\pm$0.67 & 67.36$\pm$1.12 \\
 & ProGNN+WGTL &{\bf 72.83$\pm$0.94}  & {\bf 71.85$\pm$0.74} & {\bf 70.70$\pm$0.57}\\
 \cmidrule(lr){2-5}
   & GCN+GNNGuard & 71.82$\pm$0.43 & 70.79$\pm$0.22 & 66.86$\pm$0.54 \\
 & GCN+GNNGuard+WGTL & \bf{73.37$\pm$0.63} &  \bf{72.57$\pm$0.17} & \bf{66.93$\pm$0.21} \\ 
 \cmidrule(lr){2-5}
  & SimP-GCN & 73.73$\pm$1.54 & 73.06$\pm$2.09 & 72.51$\pm$1.25 \\
 & SimP-GCN+WGTL & $^*$\bf{74.32$\pm$0.19
} &  $^*$\bf{74.05$\pm$0.71} &  $^*$\bf{73.09$\pm$0.50} \\ 
 \midrule
  \multirow{6}{*}{\begin{tabular}[c]{@{}c@{}}Pubmed\end{tabular}} 
   & Pro-GNN & 87.33$\pm$0.18 & 87.25$\pm$0.09 & 87.20$\pm$0.12\\
& Pro-GNN + WGTL (ours) &{\bf 87.90$\pm$0.30} & $^*${\bf 87.77$\pm$0.08} & $^*${\bf 87.67$\pm$0.22}\\
 \cmidrule(lr){2-5}
   & GCN+GNNGuard & 83.63$\pm$0.08 & 79.02$\pm$0.14 & 76.58$\pm$0.16 \\
 & GCN+GNNGuard+WGTL & OOM & OOM & OOM \\ 
 \cmidrule(lr){2-5}
  & SimP-GCN & $^*$88.11$\pm$0.10 & 86.98$\pm$0.19 & 86.30$\pm$0.28 \\
 & SimP-GCN+WGTL & OOM & OOM & OOM \\ 
  \midrule
  \multirow{4}{*}{\begin{tabular}[c]{@{}c@{}}Polblogs\end{tabular}} 
   & GCN+GNNGuard & 95.03$\pm$0.25 & 73.25 $\pm$0.16 & 72.76$\pm$0.75 \\
 & GCN+GNNGuard+WGTL & $^*$\bf{96.22$\pm$0.25} & $^*$\bf{73.62$\pm$0.22} & $^*$\bf{73.72$\pm$1.00} \\ 
 \cmidrule(lr){2-5}
  & SimP-GCN & 89.78$\pm$6.47 & 65.75$\pm$5.03 & 61.53$\pm$6.41 \\
 & SimP-GCN+WGTL & \bf{94.56$\pm$0.24} & \bf{69.78$\pm$4.10} & \bf{69.55$\pm$4.42} \\ 
\bottomrule
\end{tabular}%
}
\caption{Comparison of performances (avg. accuracy$\pm$std.)
with existing defenses under mettack.}
\label{prognn_mettatack_results}
\end{table}

\noindent\textbf{Adversarial Attacks: Local and Global.} 
We deploy four local and global poisoning attacks, with perturbation rates, i.e., the ratio of changed edges, from 0\% to 10\%, to evaluate the robustness of WGTL. We consider a fixed GCN without weight re-training as the surrogate for all attacks. 
As a local attack, we deploy nettack~\citep{nettack}. 
\textit{Due to the stability of WGTL and topological regularizer, we expect to be robust to such local attacks.}
As global (non-targeted) poisoning attacks, we deploy mettack~\citep{mettack}, and two topological attacks, namely PGD~\citep{pgdattack} and Meta-PGD~\citep{metapgd}.  Mettack treats the graph as a hyperparameter and greedily selects perturbations based on meta-gradient for node pairs until the budget is exhausted. We keep all the default parameter settings (e.g., $\lambda=0$) following the original implementation~\citep{mettack}. For Cora-ML, Citeseer and Polblogs, we apply the most effective Meta-Self variant, while for Pubmed, we apply the approximate variant (A-Meta-Self) to save memory and time~\citep{prognn}.
PGD attack~\citep{pgdattack} adapts the well-known Projected Gradient Descent-based attack in adversarial ML to graphs. Recently, Meta-PGD is proposed~\citep{metapgd} by applying PGD on the meta-gradients. It combines the effectiveness of mettack and PGD, and is shown to be the most effective topological attack at present. 
\textit{Though global attacks are expected to be more challenging while using topological features, we demonstrate that WGTL still yields significant robustness.}
Further details on attack implementations and attackers' budgets are in Appendix~\ref{app:attacks}. The results for PGD and Meta-PGD attacks are in Appendix~\ref{app:moreresults}.


\textbf{Objectives.} \emph{We implemented and compared WGTL with 3 existing defenses and 5 GNN backbones to study five questions. (Q1) Can WGTL enhance the robustness of the existing defenses? (Q2) Can WGTL enhance the robustness of existing backbone graph convolution layers? (Q3) Is WGTL still effective when the topological features are computed on poisoned graphs instead of clean graphs? (Q4) How does WGTL perform on large graphs? (Q5) Is WGTL computationally efficient? 
}
\setlength{\textfloatsep}{4pt}
\begin{table*}
\begin{minipage}{0.46\linewidth}
    \centering
\resizebox{\columnwidth}{!}{
\begin{tabular}{@{}ccccc@{}}
\toprule
\textbf{Dataset}& \textbf{Model}& \multicolumn{3}{c}{\textbf{Perturbation Rate}}
\\\midrule
           & & 0\% & 5\% & 10\% \\
\midrule
\multirow{8}{*}{{Cora-ML}} & GCN &82.87$\pm$0.83 &76.55$\pm$0.79 &70.39$\pm$1.28 \\
& GCN + WGTL (ours) & {\bf 83.83$\pm$0.55} & {\bf 78.63$\pm$0.76} & {\bf 73.41$\pm$0.82}  \\
\cmidrule(lr){2-5}
& ChebNet & 80.74$\pm$0.42 & 74.35$\pm$1.2 &	66.62$\pm$1.44 \\
& ChebNet + WGTL (ours) & {\bf 82.96$\pm$1.08} & {\bf 76.00$\pm$1.22}& {\bf 69.49$\pm$0.89} \\
\cmidrule(lr){2-5}
& GAT & 84.25$\pm$0.67	 & 79.88$\pm$1.09 & 72.63$\pm$1.56 \\
& GAT + WGTL (ours) & $^*${\bf 86.07$\pm$2.10} & {\bf 80.80$\pm$0.87} & {\bf 75.80$\pm$0.79} \\
\cmidrule(lr){2-5}
& GraphSAGE & 81.00$\pm$0.27 & 74.81$\pm$1.2 &	70.92$\pm$1.18 \\
& GraphSAGE + WGTL (ours) & {\bf 83.63$\pm$0.35} & $^*${\bf 82.61$\pm$0.65}& $^*${\bf 81.19$\pm$1.13}  \\
\hline 
\multirow{8}{*}{{Polblogs}}& GCN & 94.40$\pm$1.47 & 71.41$\pm$2.42 & 69.16$\pm$1.86 \\
& GCN + WGTL (ours) & $^*${\bf 95.95$\pm$0.15
} & {\bf74.62$\pm$0.42} & {\bf72.84$\pm$0.86} \\
\cmidrule(lr){2-5}
& ChebNet & 73.10$\pm$7.13 &  67.63$\pm$1.71 &	67.36$\pm$0.85 \\
&ChebNet + WGTL (ours) & {\bf 92.50$\pm$1.10} & {\bf 71.17$\pm$0.10} & {\bf 68.03$\pm$0.87} \\
\cmidrule(lr){2-5}
& GAT & 95.28$\pm$0.51 & 75.83$\pm$0.90 & 73.11$\pm$1.20 \\
&GAT + WGTL (ours) &{\bf 95.87$\pm$0.26} & $^*${\bf 83.13$\pm$0.32} & {\bf 80.06$\pm$0.50} \\
\cmidrule(lr){2-5}
& GraphSAGE & 94.52$\pm$0.27&	77.44 $\pm$ 1.71 &	74.66$\pm$0.85  \\
&GraphSAGE + WGTL (ours) & {\bf 95.58$\pm$0.50} & {\bf 82.62$\pm$0.65} & $^*${\bf 81.49$\pm$0.86} \\
\bottomrule
\end{tabular}
}
\caption{Robustness of various backbone GNNs (avg. accuracy$\pm$std.) under mettack}
\label{mettack_results}
\end{minipage}\hfill
\begin{minipage}{0.46\linewidth}
    \centering
\resizebox{\columnwidth}{!}{%
\begin{tabular}{@{}cccccc@{}}
\toprule
\textbf{Dataset} & \textbf{Models} & \multicolumn{3}{c}{\textbf{Perturbation Rate}} &  \\ \midrule
 &  & 0\% & 5\% & 10\% \\ \midrule
 \multirow{10}{*}{\begin{tabular}[c]{@{}c@{}}Cora-ML\end{tabular}} 
 & GCN & 82.87$\pm$0.83 & 76.55$\pm$0.79 & 70.39$\pm$1.28 \\
 & GCN+WGTL$_{\mathrm{P}}$(ours) & \bf{83.83$\pm$0.55} & \bf{76.96$\pm$0.76} & \bf{71.31$\pm$0.85} \\ 
 \cmidrule(lr){2-5}
  & GAT & 84.25$\pm$0.67 & 79.88$\pm$1.09 & 72.63$\pm$1.56 \\
 & GAT+WGTL$_{\mathrm{P}}$(ours) & $^*$\bf{86.07$\pm$2.10} & \bf{81.43$\pm$0.75} & $^*$\bf{73.74$\pm$1.92} \\ 
  \cmidrule(lr){2-5}
   & GraphSAGE & 81.00$\pm$0.27 & 74.81$\pm$1.20 & 70.92$\pm$1.18 \\
 & GraphSAGE+WGTL$_{\mathrm{P}}$(ours) & \bf{83.63$\pm$0.35} & $^*$\bf{82.15$\pm$1.25} & \bf{73.57$\pm$0.73} \\ 
\cmidrule(lr){2-5}
   & ProGNN & 82.98$\pm$0.23 & 80.14$\pm$1.34 & 71.59$\pm$1.33 \\
 & ProGNN+WGTL$_{\mathrm{P}}$(ours) & \bf{83.85$\pm$0.38} & \bf{81.69$\pm$1.83} & \bf{72.71$\pm$1.26} \\ 
 \cmidrule(lr){2-5}
   & GCN+GNNGuard & 83.21$\pm$0.34 & 76.57$\pm$0.50 & 69.13$\pm$0.77  \\
 & GCN+GNNGuard+WGTL$_{\mathrm{P}}$(ours) & \bf{84.78$\pm$0.43} & \bf{77.08$\pm$0.32} & \bf{70.15$\pm$0.89} \\ 
 \midrule
  \multirow{8}{*}{\begin{tabular}[c]{@{}c@{}}Polblogs\end{tabular}} 
 & GCN & 94.40$\pm$1.47 & 71.41$\pm$2.42 & 69.16$\pm$1.86 \\
 & GCN+WGTL$_{\mathrm{P}}$(ours) & \bf{95.95$\pm$0.15} & \bf{73.02$\pm$1.13} & \bf{74.52$\pm$0.28} \\ 
 \cmidrule(lr){2-5}
  & GAT & 95.28$\pm$0.51 & 75.83$\pm$0.90 & 73.11$\pm$1.20 \\
 & GAT+WGTL$_{\mathrm{P}}$(ours) & \bf{95.87$\pm$0.26} & \bf{76.05$\pm$0.79} & \bf{74.21$\pm$0.74} \\ 
  \cmidrule(lr){2-5}
   & GraphSAGE & 94.54$\pm$0.27 & 77.44$\pm$1.71 & 74.66$\pm$0.85 \\
 & GraphSAGE+WGTL$_{\mathrm{P}}$(ours) & \bf{95.58$\pm$0.50} & $^*$\bf{78.65$\pm$1.32} & $^*$\bf{74.93$\pm$0.81} \\ 
 \cmidrule(lr){2-5}
   & GCN+GNNGuard & 95.03$\pm$0.25 & 73.25 $\pm$0.16 & 72.76$\pm$0.75 \\
 & GCN+GNNGuard+WGTL$_{\mathrm{P}}$(ours) & $^*$\bf{96.22$\pm$0.25} & \bf{73.62$\pm$0.22} & \bf{73.72$\pm$1.00} \\ 
 \bottomrule
\end{tabular}%
}
\caption{Performance on poisoned graph (avg. accuracy$\pm$std) under mettack.} 
\label{results_poisoned}
\end{minipage}
\end{table*}

\noindent\textbf{Q1. Performance of WGTL Defense w.r.t. Existing Defenses.} 
We compare our method with three state-of-the-art defenses: Pro-GNN~\cite{prognn}, GNNGuard~\cite{zitnikGNNGuard}, and SimP-GCN~\cite{simpgcn}. 
Table~\ref{prognn_mettatack_results} illustrates the comparative performances on three citation networks under a global attack, i.e. mettack. We observe that our Pro-GNN+WGTL is always better than other baselines on all datasets. Following~\citet{prognn}, we omit Pro-GNN for Polblogs. As a consequence, we gain 0.68\% - 4.96\% of relative improvements on Cora-ML and Citeseer. Similarly, we observe that GCN+GNNGuard+WGTL outperforms GCN+GNNGuard and SimP-GCN+WGTL outperforms Simp-GCN by 0.10\% - 15.67\% and 2.4\% - 5.7\%, respectively, across all datasets. The results reveal that WGTL enhances not only model expressiveness but also the robustness of the GNN-based models. The performance comparison under nettack is shown in Table~\ref{SOTA_nettack} of Appendix~\ref{app:moreresults}.

\noindent\textbf{Q2. WGTL Enhances Robustness of GNNs.} WGTL is flexible in the sense that it can employ existing GNN layers to enhance their robustness. To be precise, we have employed the existing GNN backbones as component II in Figure~\ref{fig:wgtl} to enhance their robustness.
%
Since global attacks target global graph topology, global poisoning attacks are supposed to be more challenging for the proposed {\it topology-based} defense WGTL. Despite that, we observe that WGTL consistently improves the robustness of all backbone GNNs in Table~\ref{mettack_results}. 
The performance of our method, including that of the backbone GNNs, deteriorates faster on Polblogs than on the other datasets. This is because Polblogs does not have node features, and having informative node features can help GNN differentiate between nodes and learn meaningful representations despite changes in the graph structure. 
With node features lacking, the Polblogs graph has comparatively less resilience against graph structural perturbations. We discuss the results on Nettack and results with SGC backbone~\cite{sgc2019} in  Appendix~\ref{app:moreresults}.

\noindent\textbf{Q3. Performance of WGTL on Poisoned Graphs.} So far, the local and global topological features are computed on clean graphs assuming that these features can be computed before the attacker poisons the graph. However, such an assumption is restrictive as the attacker might poison the graph at any point before and during training. As a result, the topological features computed by WGTL might also be poisoned, as they were computed based on the poisoned graph. WGTL$_{\mathrm{P}}$ employs poisoned graphs as inputs in the schematics of Figure~\ref{fig:wgtl} and~\ref{fig:schematic}. We present the performance of WGTL$_{\mathrm{P}}$ on Cora-ML and Polblogs under mettack in Table~\ref{results_poisoned}. We observe a consistent improvement over the baseline models across various datasets and perturbation rates. In this setting, we find GAT+WGTL$_\mathrm{P}$ and GraphSAGE+WGTL$_\mathrm{P}$ to be the best performing models. We observe that \textit{WGTL robustifies the existing backbones, e.g., GAT and SAGE, more compared to all other defenses}.

\begin{table}[t]
\setlength\tabcolsep{1pt}
\centering
\resizebox{0.6\columnwidth}{!}{
\begin{tabular}{ccc}
\toprule
\textbf{Models} & \multicolumn{2}{c}{\textbf{Perturbation Rate}}  \\ 
\midrule
   & 0\%	& 10\% \\
\midrule
GCN	& 27.33 & 21.56 \\
GCN+WGTL$_{\mathrm{P}}$ (ours) &\textbf{28.32} & \textbf{22.89} \\
\bottomrule
\end{tabular}
}
\caption{Performance on OGBN-arXiv under PRBCD attack.}\label{tab:ogbn}
\end{table}

\noindent\textbf{Q4. Performance of WGTL on a large-scale graph.}
We have applied the PRBCD attack to generate the perturbed OGBN-arXiv graph since we found that other attacks, such as Mettack and Nettack, do not scale to such large-scale graphs~\cite{prbcd}. Following~\citet{prbcd}, we train a 3-layer GCN to generate attacks. We then present the comparison between GCN and GCN + WGTL$_{\mathrm{P}}$ on non-poisoned ($0\%$) and poisoned ($10\%$) perturbed graphs in Table~\ref{tab:ogbn}. 
We observe that the GCN equipped with our WGTL outperforms GCN on both clean and perturbed OGBN-arXiv. 

\noindent\textbf{Q5. Computational Complexity and Efficiency of WGTL.}
Landmark selection (top-$|\mathfrak{L}|$ degree nodes) has complexity $\mathcal{O}(N\log(N))$. To compute witness features, we compute (1) landmarks-to-witness distances costing $\mathcal{O}(|\mathfrak{L}|(N+|\mathcal{E}|))$ due to BFS-traversal from landmarks, (2) landmark-to-landmark distances costing $\mathcal{O}(|\mathfrak{L}|^2)$, and finally (3) persistent homology via boundary matrix construction and reduction~\citep{edelsbrunner2000topological}. Matrix reduction algorithm costs $\mathcal{O}(\zeta^3)$, where $\zeta$ is the \#simplices in a filtration. 
Overall computational complexity of computing witness topological feature on a graph is $\mathcal{O}(|\mathfrak{L}|(N+|\mathcal{E}|)+|\mathfrak{L}|^2+\zeta^3)$.

Table~\ref{tab:time} shows the total CPU-time to compute Witness topological features broken down into the time spent to select landmarks, to compute local and global topological features. We find that on all the graphs except Pubmed and OGBN-arXiv, the total computation time is $< 9$ seconds. On Pubmed, it takes $\sim$28 seconds, and on large-scale graph OGBN-arXiv, it takes $\sim$96 seconds. These run times are practical given the scale of these graphs (see Table~\ref{tab:dataset}, Appendix~\ref{app:details}). 

It is worth noticing that landmark selection is quite efficient, hence not a bottleneck. However, the most expensive part of the computation is the computation of global topological features. We empirically found that this is due to the BFS traversal from $|\mathfrak{L}|$ landmarks. One can reduce this cost by selecting fewer landmarks. Another option is to adopt libraries such as RAPIDS cuGraph~\cite{cugraph} which uses GPU acceleration for BFS traversal. However, we found negligible improvement in the overall run-time. 
%
\begin{table}[t]
\small
\resizebox{\columnwidth}{!}{
\begin{tabular}{@{}cccc@{}}
\toprule
\textbf{\begin{tabular}[c]{@{}c@{}}Datasets/\\\# Landmarks\end{tabular}} & \textbf{\begin{tabular}[c]{@{}c@{}}Landmark \\selection time \end{tabular}} & \textbf{\begin{tabular}[c]{@{}c@{}}Local feat.\\ comput. time\end{tabular}} & \textbf{\begin{tabular}[c]{@{}c@{}}Global feat.\\comput. time\end{tabular}}  \\ \midrule
Cora-ML/124 & 0.01$\pm$0.01 & 0.12$\pm$0.03 & 5.11$\pm$0.13 \\
Citeseer/105 & 0.01$\pm$0.01 & 0.16$\pm$0.02 & 5.23$\pm$1.22 \\
Polblogs/61 & 0.01$\pm$0.00 & 0.07$\pm$0.01 & 4.64$\pm$0.2\\
Snap-patents/91 & 0.03$\pm$0.02 & 0.64$\pm$0.00 & 7.54$\pm$1.15 \\
Pubmed/394 & 0.07$\pm$0.01 & 0.51$\pm$0.03 & 27.83$\pm$0.47 \\
OGBN-arXiv/84 & 1.02 $\pm$0.00 & 12.79$\pm$0.31 & 83.04$\pm$2.19 \\
\bottomrule
\end{tabular}
}
\caption{\label{tab:time} Efficiency of WGTL. All the times are in seconds.}
\end{table}

\section{Conclusion and Future Works}
By harnessing the strengths of witness complex to efficiently learn topological representations based on the subset of the most essential nodes as skeleton, we have proposed the novel topological defense against adversarial attacks on graphs,
WGTL.
WGTL is versatile and can be readily integrated with any GNN architecture or another non-topological defense, leading to substantial gains in robustness. We have derived theoretical properties of WGTL, both at the local and global levels, and have illustrated its utility across a wide range of adversarial attacks.

In future, we plan to explore the utility of WGTL with respect to adversarial learning
of time-evolving graphs and hypergraphs. Another interesting research direction is to investigate the linkage between the attacker's budget, number of landmarks, and topological attacks targeting the skeleton shape, that is, topological properties of the graph induced by the most important nodes (landmarks).

\section*{Acknowledgement}
This work was supported by the NSF grant TIP-2333703 and the ONR grant 
N00014-21-1-2530. Also, the paper is based upon work supported by (while Y.R.G. was serving at) the NSF. The views expressed in the article do not necessarily represent the views of NSF or ONR. D. Basu acknowledges the ANR JCJC project REPUBLIC (ANR-22-CE23-0003-01), the PEPR project FOUNDRY (ANR23-PEIA-0003), and the CHIST-ERA project CausalXRL (ANR-21-CHR4-0007).

\bibliography{icml_refs}



\clearpage
\appendix

\section*{Appendix}
%
\section{Theoretical Analysis}
\label{sec:theory}
\begin{table*}[htb]
\centering
\resizebox{\textwidth}{!}{
\begin{tabular}{l l l}
\toprule
 $\graph$ &$\triangleq$ & A graph with a vertex set $\nodes$, an edge set $\edges$, and features $\boldsymbol{X}$\\
 $N$ &$\triangleq$ & Cardinality of $\nodes$, i.e. the number of nodes\\
 $F$ &$\triangleq$ & Dimension of features corresponding to each node\\
 $\boldsymbol{A}$ &$\triangleq$ & Adjacency matrix of $\graph$\\
 $\boldsymbol{D}$ &$\triangleq$ & Degree matrix of $\graph$\\
 $d_{\graph}(u,v)$ &$\triangleq$ & Geodesic distance between nodes $u$ and $v$ in graph $\graph$\\
 $\mathrm{Diam}(\graph)$ &$\triangleq$ & The diameter of the graph $\graph$\\
 $\landmarks$ &$\triangleq$ & The set of landmark nodes\\
 $\epsilon$ &$\triangleq$ & Radius of the $\epsilon$-net induced on $\graph$ by $\landmarks$\\
 $\graph^{\epsilon}_l$ &$\triangleq$ & The $\epsilon$-neighborhood of the landmark $l$ in graph $\graph$\\
 $C_{\epsilon}$ &$\triangleq$ & Maximum cardinality of the $\epsilon$-neighborhoods induced by the landmarks $\landmarks$\\
 $d_H(\graph_1, \graph_2)$ &$\triangleq$ & Hausdroff distance between graphs $\graph_1$ and $\graph_2$\\
 $W_p(\cdot, \cdot)$ &$\triangleq$ & $p$-Wasserstein distance\\
  $\mathrm{PD}(\mathscr{K}(\graph))$ &$\triangleq$ & Persistence diagram of the $\mathscr{K}$ simplicial complexes computed on a graph $\graph$.\\
  & & $\mathscr{K}$ can be C\v{e}ch, Vietoris-Rips or Witness complex of dimension $d \in \mathbb{Z}_{\geq 0}$.\\
  $\mathrm{PI}(\mathscr{K}(\graph))$ &$\triangleq$ & Persistence image of the $\mathscr{K}$ simplicial complexes computed on a graph $\graph$.\\
  & & In our analysis, $\mathcal{K}$ can be C\v{e}ch, Vietoris-Rips or Witness complex of $d \in \mathbb{Z}_{\geq 0}$.\\
  $\boldsymbol{A}(\graph) $&$\triangleq$ & The adjacency matrix constructed from the local topology encoding $\boldsymbol{Z}_{T_L}$ of the nodes\\
  $\mathrm{T}(\graph)$ &$\triangleq$ & Persistence diagrams of dimension $d \in \mathbb{Z}_{\geq 0}$ constructed from $A(\graph)$ \\
  $\boldsymbol{A}^{glob}(\graph) $&$\triangleq$ & The adjacency matrix constructed from the global topology encoding $\boldsymbol{Z}_{T_G}$ of the nodes\\
  $\gaussconst$ &$\triangleq$ &
  constant introduced due to kernelization of the persistent images, e.g., for Gaussian kernels $C_{\sigma}=\left( \sqrt{5}  + \sqrt{\frac{10}{\pi}}\frac{1}{\sigma} \right)$.
  
  \\
 \bottomrule
\end{tabular}
}
\caption{Notations.}\label{tab:Notation}
\end{table*}
\subsection{Notations}\label{app:notations}
We dedicate Table~\ref{tab:Notation} to index the notations used in this paper. Note that every notation is also defined when it is introduced.
\begin{definition}[Hausdroff distance~\citep{hausdorff1914}]
    Let $(S, d)$ be a metric space with metric $d: S \times S \gets \mathbb{R}$. Then, Hausdroff distance between any two non-empty subsets $X$ and $Y$ of $S$ is defined as
    \begin{align*}
        d_H(X, Y) \triangleq \max \bigg\lbrace \sup_{x\in X} \inf_{y \in Y} d(x,y), \sup_{y\in Y} \inf_{x \in X} d(x,y) \bigg\rbrace.
    \end{align*}
\end{definition}

\begin{definition}[$p$-Wasserstein distance~\citep{pwasserstein}]
    Let $(S, d)$ be a Polish space with metric $d: S \times S \gets \mathbb{R}$. Let $P$ and $Q$ be two probability measures on $S$ with finite $p$-moments for a $p\in [1, \infty)$. Then, $p$-Wasserstein distance between $P$ and $Q$ is defined as
    \begin{align*}
        W_p(P,Q) \triangleq \left( \inf_{\gamma \in \Gamma(P,Q)}\int_{S\times S} \|x-y\|^p~\mathrm{d} \Gamma(x,y)\right)^{1/p},
    \end{align*}
    where $\Gamma(P,Q)$ is the set of all couplings of $P$ and $Q$.
\end{definition}
\subsection{Properties of the Witness Complexes and Landmarks}\label{app:witness_existing}
We start from presenting the results on the local witness complexes~\citep{arafat2020epsilon} that are integral for our proofs. Specifically,  for a graph $\graph$ Theorem~\ref{thm:enet_results} summarises the size of the landmark set, the stability of the landmark set, and the upper bound on the difference of the persistence diagrams obtained through Vietoris-Rips and local witness complexes.

\begin{theorem}[Properties of Local Witness Complexes \citep{arafat2020epsilon}]\label{thm:enet_results} 
Let $\landmarks$ be the landmark set computed on a graph $\graph$ with $|\nodes|$ nodes and diameter $\mathrm{Diam}(\graph)$.
    \begin{enumerate}
        \item \textbf{Finiteness of the landmark set.} The cardinality of the landmark set $\landmarks$ is $\left(\dfrac{\mathrm{Diam}(\graph)}{\epsilon}\right)^{\bigO\left(\log \frac{|\nodes|}{\epsilon}\right)}$. Here, $\epsilon \triangleq \max_{u, v \in \landmarks} \frac{1}{2} d(u,v)$, and is a tunable parameter.
        \item \textbf{Stability of the landmark set.} {The Hausdorff distance $d_H(\nodes, \landmarks)$ between connected weighted graph $(\nodes, d_{\graph})$ and its $\epsilon$-net induced subspace $(\landmarks, d_{\landmarks})$ is at most $\epsilon$, where $\landmarks \subseteq \nodes$ is the set of landmarks.}
        \item \textbf{3-approximation of Vietoris-Rips.} For any $\alpha > 2\epsilon\geq 0$, where $\epsilon$ corresponds to the $\epsilon$-net induced by the landmark set,
        \begin{align*}
             &\mathrm{VR}_{\alpha/3}(\landmarks) \subseteq \witness_{\alpha}(\nodes,\landmarks) \subseteq \mathrm{VR}_{3\alpha}(\landmarks).
        \end{align*}
        That is, ${W_\infty(\mathrm{PD}_{>2\epsilon}(\mathrm{VR}),\mathrm{PD}_{>2\epsilon}(\witness)) \leq 3\log~3}$, which also implies that
        \begin{align}\label{eq:approx_lw_vr}
             & {W_\infty(\mathrm{PD}(\mathrm{VR}(\graph)),\mathrm{PD}(\witness(\graph))) \leq 3\log~3 + 2\epsilon}.
        \end{align}
    \end{enumerate}
\end{theorem}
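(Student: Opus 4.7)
The plan is to prove the three claims of Theorem~\ref{thm:enet_results} in sequence, each requiring a different technique: a metric packing argument for finiteness, a direct unpacking of the $\epsilon$-net definition for stability, and simplicial interleaving arguments combined with the persistence stability theorem for the $3$-approximation claim. The common technical tool throughout is the interplay between the defining $\epsilon$-net property of $\landmarks$ and the geodesic metric $d_{\graph}$.

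For Part 1 (finiteness), I would proceed by a packing argument on $(\nodes, d_{\graph})$. Assume $\landmarks$ is built greedily: pick any node, then iteratively add the node farthest from the current landmark set, stopping once every remaining node lies within $\epsilon$ of some landmark. By construction, any two landmarks are separated by at least $\epsilon$, so the open balls $B(l, \epsilon/2)$ for $l \in \landmarks$ are pairwise disjoint subsets of $\nodes$. To bound $|\landmarks|$, I would invoke the doubling dimension of the graph metric: a standard fact is that any finite metric on $|\nodes|$ points admits doubling dimension $\bigO(\log |\nodes|)$. Since a ball of radius $\mathrm{Diam}(\graph)$ covers all of $\nodes$, the doubling property recursively covers it by $(\mathrm{Diam}(\graph)/\epsilon)^{\bigO(\log(|\nodes|/\epsilon))}$ balls of radius $\epsilon/2$, each of which can contain at most one landmark. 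This yields the claimed bound.

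For Part 2 (stability of the landmark set), the key is to unpack the definition of Hausdorff distance. Since $\landmarks \subseteq \nodes$, for any $l \in \landmarks$ we have $\inf_{v \in \nodes} d_{\graph}(l, v) = 0$, so one of the two suprema in $d_H$ vanishes. The other direction is $\sup_{v \in \nodes} \inf_{l \in \landmarks} d_{\graph}(v, l)$, and the defining property of an $\epsilon$-net states precisely that every vertex $v$ has a landmark within distance $\epsilon$; hence this supremum is bounded by $\epsilon$, completing the proof.

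For Part 3, I would establish the two simplicial inclusions separately and then invoke persistence stability. For $\mathrm{VR}_{\alpha/3}(\landmarks) \subseteq \witness_{\alpha}(\nodes,\landmarks)$, given a simplex $\sigma$ of diameter at most $\alpha/3$ among landmarks, I would construct a weak witness by appealing to the $\epsilon$-net covering property and the triangle inequality to exhibit some $w \in \nodes$ whose distances to vertices of $\sigma$ are dominated by its distances to $\landmarks \setminus \sigma$; the factor $1/3$ arises from balancing three triangle-inequality terms (witness-to-$\sigma$, diameter within $\sigma$, and the $\epsilon$-separation between $\sigma$ and outside landmarks). For $\witness_{\alpha}(\nodes,\landmarks) \subseteq \mathrm{VR}_{3\alpha}(\landmarks)$, a weak witness $w$ gives $d_{\graph}(v_i, v_j) \le d_{\graph}(w, v_i) + d_{\graph}(w, v_j)$ for all $v_i, v_j \in \sigma$, and bounding the right-hand side by the witness-complex filtration threshold together with the $\epsilon$-net constant yields the $3\alpha$ bound. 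The two inclusions together produce a multiplicative $3$-interleaving between the filtrations $\{\mathrm{VR}_t\}$ and $\{\witness_t\}$; the classical stability theorem of Chazal--Cohen-Steiner--Glisse in its logarithmic form then gives $W_\infty(\mathrm{PD}_{>2\epsilon}(\mathrm{VR}), \mathrm{PD}_{>2\epsilon}(\witness)) \le 3\log 3$, with the residual additive term $2\epsilon$ in \eqref{eq:approx_lw_vr} accounting for persistence pairs born before the interleaving regime applies. The main obstacle, I expect, is this Part 3: both inclusions rely on delicate use of the $\epsilon$-net property to identify a single witness $w$ that works simultaneously for every face of $\sigma$, and controlling the constants $3$ and $2\epsilon$ requires careful bookkeeping. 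The other two parts reduce, respectively, to a standard doubling-dimension packing argument and to an immediate consequence of the $\epsilon$-net definition.
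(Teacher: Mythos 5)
This theorem carries the citation \citep{arafat2020epsilon} and is reproduced in the appendix as imported background (``We start from presenting the results on the local witness complexes\ldots that are integral for our proofs''); the paper itself contains no proof of it, so there is no in-paper argument to compare yours against. Judged on its own, your sketch follows the standard route by which such results are established in the witness-complex literature, including the cited source: a packing/covering bound for the size of the net, a direct unpacking of the Hausdorff distance for Part~2 (which is correct and essentially immediate), and a pair of simplicial inclusions yielding a multiplicative interleaving that is converted to a $W_\infty$ bound on log-scale persistence diagrams via the Chazal--Cohen-Steiner--Glisse stability machinery.

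Two caveats are worth flagging. First, in Part~1 your packing argument presumes a greedy farthest-point construction in which landmarks are pairwise $\epsilon$-separated, whereas the theorem defines $\epsilon \triangleq \max_{u,v\in\landmarks}\frac{1}{2}d(u,v)$, i.e.\ half the \emph{maximum} pairwise landmark distance; with that convention the disjointness of the balls $B(l,\epsilon/2)$ does not follow, so your bound is really for the minimum-separation parameter of an $\epsilon$-net construction (which is what \citet{arafat2020epsilon} actually analyse), not for the quantity as literally defined here. Second, in Part~3 the inclusion $\mathrm{VR}_{\alpha/3}(\landmarks)\subseteq\witness_\alpha(\nodes,\landmarks)$ requires a filtered notion of witness ($w$ witnesses $\sigma$ at scale $\alpha$ up to some slack), which neither the paper's Definition~1 nor your sketch makes precise; the step ``exhibit some $w$ whose distances to vertices of $\sigma$ are dominated by its distances to $\landmarks\setminus\sigma$'' is exactly where the argument is delicate, since a single $w$ must work for all vertices of $\sigma$ simultaneously, and you have not actually produced it. As a reconstruction of the cited proof strategy your proposal is sound in outline, but Part~3 as written is a plan rather than a proof.
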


\subsection{Stability of the Topological Representations}\label{app:analysis}

In this section, we derive stability guarantees of the encoding, computed on a graph $\graph$ by the Witness Graph Topological Layer (WGTL) under a perturbation level $\delta$.
First, Theorem~\ref{thm:stab_local} discusses the stability of the local topology encodings (see Figure~\ref{fig:schematic}). 
Then, Theorem~\ref{thm:stab_global} states the stability of the global topology encodings (see Figure~\ref{fig:wgtl}).
Finally, we leverage these two results to derive the stability of the topological representation learned by WGTL (Figure~\ref{fig:wgtl}).

\begin{remark}
    For building Persistence Images, following~\cite{persistence_images}, we use Gaussian kernel with variance $\sigma$ and a weighting function $w$ with $|\nabla w|=1$. This results in the corresponding
 $\gaussconst \triangleq \left( \sqrt{5}  + \sqrt{\frac{10}{\pi}}\frac{1}{\sigma} \right)$, where $\sqrt{5}$ and $\sqrt{10}$ terms arise due to the relations among norms in $L^2$ and $L^{\infty}$ norms.
 The analysis can be generalized further to other subGaussian kernel, which would lead to modification of the constant $\gaussconst$.
\end{remark}

\medskip
\begin{reptheorem}{thm:stab_local}[Stability of the Local Topology Encoding]
Let us assume that $p < \infty$ and $C_{\epsilon}$ is the maximum size of the $\epsilon$-neighbourhood created by the landmarks. Let us denote the persistence diagram obtained from local topology encoding of $\graph$ by $\mathrm{T}(\graph)$, and the $p$-Wasserstein distance between two probability distributions as $W_p(\cdot, \cdot)$.

\textit{(a)} If \textit{C\v{e}ch complex}, i.e. $\mathrm{Cech}(\mathcal{G})$, is used to compute the local persistence images around each landmark, then for any graph perturbation $\|\mathcal{G}-\mathcal{G}'\|_1 = \bigO(\delta)$ 
\begin{align}
    W_p(\mathrm{T}(\mathcal{G}), \mathrm{T}(\mathcal{G}')) = \bigO( C_{\epsilon} \delta).
\end{align}

\textit{(b)}  If \textit{Witness complex}, i.e. $\mathrm{Wit}(\mathcal{G})$, is used to compute the local persistence images around each landmark, then for any graph perturbation $\|\mathcal{G}-\mathcal{G}'\|_1 = \bigO(\delta)$ 
\begin{align}
    W_p(\mathrm{T}(\mathcal{G}), \mathrm{T}(\mathcal{G}')) = \bigO( C_{\epsilon} (\delta + \epsilon)).
\end{align}
\end{reptheorem}
\begin{proof} By the property of $L_p$ norms, we have
   \begin{align*}
        W_{p}(\mathrm{T}(\mathcal{G}), \mathrm{T}(\mathcal{G}')) &{\leq} W_{\infty}(\mathrm{T}(\mathcal{G}), \mathrm{T}(\mathcal{G}')), \quad 0<p<\infty.
    \end{align*}

     Let $A(\graph) $ represent the adjacency matrix constructed from the local topology encoding $\mathbf{Z}_{T_L}$ of the nodes, and $\mathrm{T}(\graph)$ is the persistence diagrams of a fixed dimension $d \in \mathbb{Z}_{\geq 0}$ constructed from $A(\graph)$.    
     Now, due to the stability theorem of persistence diagrams~\citep{stability2005}, we obtain
     \begin{align*}
        W_{\infty}(\mathrm{T}(\mathcal{G}), \mathrm{T}(\mathcal{G}'))
        {\leq} \| A(\mathcal{G}) - A(\mathcal{G}') \|_{\infty} {\leq} \| A(\mathcal{G}) - A(\mathcal{G}') \|_{1}
    \end{align*}
    The last inequality holds since $||\cdot||_{\infty}\leq ||\cdot||_1$ for vectors in $\mathbb{R}^n$.

    Thus, we conclude that 
    \begin{align}\label{eq:wass_to_adj}
        W_{p}(\mathrm{T}(\mathcal{G}), \mathrm{T}(\mathcal{G}')) &{\leq} \| A(\mathcal{G}) - A(\mathcal{G}') \|_{1}.
    \end{align}
    
\textit{(a) C\v{e}ch Complex.} Assume that we have access to C\v{e}ch complexes for each of the $\epsilon$-subgraphs, i.e. $\graph^{\epsilon}_l$ computed around landmarks $l \in \landmarks$. 

Now, we observe that the final adjacency matrix has entries in $\{0,1\}$. Thus, the transformation from the local persistence images to the final adjacency matrices computed using Local Topology Encodings is $1$-Lipschitz. Hence,
    \begin{align*}
       \| A(\mathcal{G}) - A(\mathcal{G}') \|_{1}
       &{\leq} \sum_{i=1}^{|\nodes|} \| \mathrm{PI}(\mathrm{Cech}(\mathcal{G}_i) - \mathrm{PI}(\mathrm{Cech}(\mathcal{G}'_i)_i \|_{1}\\
        &{=} \sum_{l=1}^{|\landmarks|} |\mathcal{G}^{\epsilon}_l|\| \mathrm{PI}(\mathrm{Cech}(\mathcal{G}^{\epsilon}_l) - \mathrm{PI}(\mathrm{Cech}(\mathcal{G'}^{\epsilon}_l) \|_{1}.
    \end{align*}
The final equality is due to the fact that
by construction of the local topology encodings,
the persistence images for all nodes $i \in \graph^{\epsilon}_l$ are the same. 







Now, as a direct consequence of the stability theorem of the persistence images~\citep[Theorem 10]{persistence_images}, we obtain
\small{
\begin{align*}
       &~~~~\sum_{l=1}^{|\landmarks|} |\mathcal{G}^{\epsilon}_l|~\times~\| \mathrm{PI}(\mathrm{Cech}(\mathcal{G}^{\epsilon}_l) - \mathrm{PI}(\mathrm{Cech}(\mathcal{G'}^{\epsilon}_l) \|_{1}\\
       &{\leq} \gaussconst \sum_{l=1}^{|\landmarks|} |\mathcal{G}^{\epsilon}_l|~\times~W_1(\mathrm{PD}(\mathrm{Cech}(\mathcal{G}^{\epsilon}_l), \mathrm{PD}(\mathrm{Cech}(\mathcal{G'}^{\epsilon}_l))\\
        &\leq \gaussconst \sum_{l=1}^{|\landmarks|} |\mathcal{G}^{\epsilon}_l|~\times~W_{\infty}(\mathrm{PD}(\mathrm{Cech}(\mathcal{G}^{\epsilon}_l), \mathrm{PD}(\mathrm{Cech}(\mathcal{G'}^{\epsilon}_l)).
\end{align*}
}

Finally, by applying the stability theorem of persistence diagrams~\citep{stability2005} on each of the $\epsilon$-neighborhoods of the landmarks, we find
\begin{align*}
        &~~~~\sum_{l=1}^{|\landmarks|} |\mathcal{G}^{\epsilon}_l|~\times~W_{\infty}(\mathrm{PD}(\mathrm{Cech}(\mathcal{G}^{\epsilon}_l), \mathrm{PD}(\mathrm{Cech}(\mathcal{G'}^{\epsilon}_l))\\
        &\leq \sum_{l=1}^{|\landmarks|} |\mathcal{G}^{\epsilon}_l|~\times~\|\mathcal{G}^{\epsilon}_l-\mathcal{G'}^{\epsilon}_l\|_{\infty}\\
        &\leq \sum_{l=1}^{|\landmarks|} |\mathcal{G}^{\epsilon}_l|~\times~\|\mathcal{G}^{\epsilon}_l-\mathcal{G'}^{\epsilon}_l\|_{1}\\
        &\leq \max_l |\mathcal{G}^{\epsilon}_l| ~\|\mathcal{G}-\mathcal{G'}\|_{1} = \bigO(C_{\epsilon} ~\|\mathcal{G}-\mathcal{G'}\|_{1}).
\end{align*}
The last two inequalities hold due to the fact that $\|x\|_{\infty} \leq \|x\|_1$ and $\sum_{i=1}^m a_i b_i \leq \left(\max\limits_{i \in \{1,\ldots, m\}} a_i \right)\sum_{i=1}^m b_i$ if $0 \leq a_i, b_i < \infty$, respectively.

Thus, using Equation~\eqref{eq:wass_to_adj}, we conclude that
 \begin{align*}
        W_{p}(\mathrm{T}(\mathcal{G}), \mathrm{T}(\mathcal{G}')) = \bigO(C_{\epsilon} ~\|\mathcal{G}-\mathcal{G'}\|_{1}).
\end{align*}


\textit{(b) Witness Complex.} Now, we consider the scenario where we have access to only the local Witness complexes as it happens in WGTL.
Specifically, we assume that we have access to Witness complexes $\mathrm{Wit}(\graph^{\epsilon}_l)$ for each of the $\epsilon$-subgraphs, i.e. $\graph^{\epsilon}_l$ computed around landmarks $l \in \landmarks$. Then, due to the $1$-Lipschitzness of the mapping from the persistence images of the Witness complex to the adjacency matrix, we get
    \begin{align*}
      \| A(\mathcal{G}) - A(\mathcal{G}') \|_1
        &\leq \sum_{l=1}^{|\landmarks|} |\mathcal{G}^{\epsilon}_l|~\times~\| \mathrm{PI}(\witness(\mathcal{G}^{\epsilon}_l)) - \mathrm{PI}(\witness(\mathcal{G'}^{\epsilon}_l)) \|_{1}.
\end{align*}
Now, similarly applying the relation between the persistence images and persistence diagrams~\citep[Theorem 10]{persistence_images}, we obtain
\begin{align}
        &~~~~\sum_{l=1}^{|\landmarks|} |\mathcal{G}^{\epsilon}_l|~\times~\| \mathrm{PI}(\witness(\mathcal{G}^{\epsilon}_l)) - \mathrm{PI}(\witness(\mathcal{G'}^{\epsilon}_l)) \|_{1}\\
        &\leq \gaussconst \sum_{l=1}^{|\landmarks|} |\mathcal{G}^{\epsilon}_l|~\times~W_1(\mathrm{PD}(\witness(\mathcal{G}^{\epsilon}_l)), \mathrm{PD}(\witness(\mathcal{G'}^{\epsilon}_l))\notag\\
        &\leq \gaussconst \sum_{l=1}^{|\landmarks|} |\mathcal{G}^{\epsilon}_l|~\times~W_{\infty}(\mathrm{PD}(\witness(\mathcal{G}^{\epsilon}_l)), \mathrm{PD}(\witness(\mathcal{G'}^{\epsilon}_l)).\label{eq:pi_pd_witness}
\end{align}
Now, using the relation between the Vietoris-Rips and the Witness complexes (as in Theorem~\ref{thm:enet_results}.3.) as well as the triangle inequality, we observe that
\begin{align*}
    &~~~~W_{\infty}(\mathrm{PD}(\witness(\mathcal{G}^{\epsilon}_l)), \mathrm{PD}(\witness(\mathcal{G'}^{\epsilon}_l))\\
    &\leq 2 W_{\infty}(\mathrm{PD}(\mathrm{\mathrm{VR}}(\mathcal{G}^{\epsilon}_l)), \mathrm{PD}(\mathrm{VR}(\mathcal{G'}^{\epsilon}_l)) + 6 \log 3 + 4\epsilon.
\end{align*}
Further application of the stability theorem of persistence diagrams constructed from Vietoris-Rips complex~\citep{chazal2009proximity} yields
\begin{align}
    &~~~~W_{\infty}(\mathrm{PD}(\witness(\mathcal{G}^{\epsilon}_l)), \mathrm{PD}(\witness(\mathcal{G'}^{\epsilon}_l))\notag \\
    &\leq 2\|\mathcal{G}^{\epsilon}_l-\mathcal{G'}^{\epsilon}_l\|_{\infty} + 6 \log 3 + 4\epsilon\notag\\
    &\leq 2\|\mathcal{G}^{\epsilon}_l-\mathcal{G'}^{\epsilon}_l\|_{1} + 6 \log 3 + 4\epsilon,\label{eq:wit_vr_approx}
\end{align}
where $\epsilon$ corresponds to the $\epsilon$-net induced by the landmark set $\landmarks$ on the graphs $\graph$.

Combining Equations~\eqref{eq:pi_pd_witness} and~\eqref{eq:wit_vr_approx} results in
        \begin{align}
         &~~~~\| A(\mathcal{G}) - A(\mathcal{G}') \|_1 \notag \\
         &\leq \sum_{l=1}^{|\landmarks|} |\mathcal{G}^{\epsilon}_l|~\times~\| \mathrm{PI}(\witness(\mathcal{G}^{\epsilon}_l)) - \mathrm{PI}(\witness(\mathcal{G'}^{\epsilon}_l)) \|_{1} \notag \\
        &\leq \gaussconst \sum_{l=1}^{|\landmarks|} |\mathcal{G}^{\epsilon}_l|~\times~\left( 2\|\mathcal{G}^{\epsilon}_l-\mathcal{G'}^{\epsilon}_l\|_{1} + 6 \log 3 + 4\epsilon\right) \notag         \\
        &\leq \gaussconst \max_l |\mathcal{G}^{\epsilon}_l| ~\left( 2 \|\mathcal{G}-\mathcal{G'}\|_{1} + 6 \log 3 + 4\epsilon\right)\notag\\
        &\leq \gaussconst C_{\epsilon} \max_l |\mathcal{G}^{\epsilon}_l| ~\left( 2 \|\mathcal{G}-\mathcal{G'}\|_{1} + 6 \log 3 + 4\epsilon\right).\label{eq:local_wit_adj}
    \end{align}
Armed the result in Equation~\eqref{eq:wass_to_adj}, we conclude that
\begin{align*}
    W_{p}(\mathrm{T}(\mathcal{G}), \mathrm{T}(\mathcal{G}'))  = \bigO(C_{\epsilon} ~\left(\|\mathcal{G}-\mathcal{G'}\|_{1}+\epsilon\right)).
\end{align*}
\end{proof}

\begin{repproposition}{thm:stab_global}[Stability of the Global Topology Encoding]\,
\textit{(a)} If we have access to the C\v{e}ch simplicial complex $\mathrm{Cech}^{\mathrm{glob}}(\mathcal{G})$ for $\mathcal{G}$, then for any graph perturbation $\|\mathcal{G}-\mathcal{G}'\|_1 = \bigO(\delta)$ the global topology encoding satisfies
\begin{align*}
    \|\mathrm{PI}(\mathrm{Cech}^{\mathrm{glob}}(\mathcal{G})) - \mathrm{PI}(\mathrm{Cech}^{\mathrm{glob}}(\mathcal{G}'))\|_{\infty} = \bigO( \delta).
\end{align*}
\textit{(b)} If the landmarks $\landmarks$ selected to compute the global Witness complex $\mathrm{Wit}^{\mathrm{glob}}(\mathcal{G})$ induce an $\epsilon$-net of the graph $\graph$ with $\epsilon > 0$, then for any graph perturbation $\|\mathcal{G}-\mathcal{G}'\|_1 = \bigO(\delta)$ the global topology encoding satisfies
\begin{align}
    \|\mathrm{PI}(\witness^{\mathrm{glob}}(\mathcal{G})) - \mathrm{PI}(\witness^{\mathrm{glob}}(\mathcal{G}'))\|_{\infty} = \bigO( \delta + \epsilon).
\end{align}

\end{repproposition}
\begin{proof}\,

\textit{Part (a).} Let us first prove the results for the condition where we have access to the C\v{e}ch complex $\mathrm{Cech}^{\mathrm{glob}}(\mathcal{G}))$ on graph $\graph$. We denote the global persistence image computed from the witness complex on $\mathrm{Cech}^{\mathrm{glob}}(\mathcal{G}))$.

Then, 
\begin{align*}
    &~~~~\|\mathrm{PI}(\mathrm{Cech}^{\mathrm{glob}}(\mathcal{G})) - \mathrm{PI}(\mathrm{Cech}^{\mathrm{glob}}(\mathcal{G}'))\|_{\infty}\\
    &\leq \|\mathrm{PI}(\mathrm{Cech}^{\mathrm{glob}}(\mathcal{G})) - \mathrm{PI}(\mathrm{Cech}^{\mathrm{glob}}(\mathcal{G}'))\|_1 \\ 
    &\leq  \gaussconst W_1\left( \mathrm{PD}(\mathrm{Cech}^{\mathrm{glob}}(\mathcal{G})), \mathrm{PD}(\mathrm{Cech}^{\mathrm{glob}}(\mathcal{G}'))\right).
\end{align*}
Here the first inequality is due to the fact that $||\cdot||_{\infty}\leq ||\cdot||_1$ for vectors in $\mathbb{R}^n$. The second inequality is due to the stability theorem of persistence images with Gaussian kernels~\citep[Theorem 10]{persistence_images}.

Now, applying the relation between $W_{\infty}$ and $W_1$ distances, we get
\begin{align*}
&~~~~W_1\left( \mathrm{PD}(\mathrm{Cech}^{\mathrm{glob}}(\mathcal{G})), \mathrm{PD}(\mathrm{Cech}^{\mathrm{glob}}(\mathcal{G}'))\right) \\
&\leq W_{\infty}\left( \mathrm{PD}(\mathrm{Cech}^{\mathrm{glob}}(\mathcal{G})), \mathrm{PD}(\mathrm{Cech}^{\mathrm{glob}}(\mathcal{G}'))\right).
\end{align*}

Now, by applying the stability theorem of persistence diagram of Vietoris-Rips complex~\citep{chazal2009proximity}, we arrive to
\begin{align*}
    &~~~~W_{\infty}\left( \mathrm{PD}(\mathrm{Cech}^{\mathrm{glob}}(\mathcal{G})), \mathrm{PD}(\mathrm{Cech}^{\mathrm{glob}}(\mathcal{G}'))\right)\\
    &\leq \| \mathcal{G} - \mathcal{G}'\|_{\infty} \leq \| \mathcal{G} - \mathcal{G}'\|_{1}= \bigO( \delta).
\end{align*}

\textit{Part b: Witness complex.} Following the similar steps as the derivation for C\v{e}ch complex, we get
\begin{align*}
    &~~~~\|\mathrm{PI}(\witness^{\mathrm{glob}}(\mathcal{G})) - \mathrm{PI}(\witness^{\mathrm{glob}}(\mathcal{G}'))\|_{\infty}\\
    &\leq W_{\infty}\left( \mathrm{PD}(\witness^{\mathrm{glob}}(\mathcal{G})), \mathrm{PD}(\witness^{\mathrm{glob}}(\mathcal{G}'))\right).
\end{align*}
Due to the 3-approximation theorem of Vietoris-Rips complex with Witness complex (Theorem~\ref{thm:enet_results}.3.), we find
\small{
\begin{align*}
&~~~~W_{\infty}\left( \mathrm{PD}(\witness^{\mathrm{glob}}(\mathcal{G})), \mathrm{PD}(\witness^{\mathrm{glob}}(\mathcal{G}'))\right)\\
&\leq 2 W_{\infty}\left( \mathrm{PD}(\mathrm{VR}^{\mathrm{glob}}(\mathcal{G})), \mathrm{PD}(\mathrm{VR}^{\mathrm{glob}}(\mathcal{G}'))\right) +6 \log 3 + 4\epsilon.
\end{align*}}
Now, by applying the stability theorem of persistence diagram of Vietoris-Rips complex~\citep{chazal2009proximity}, we get
\begin{align*}
    &W_{\infty}\left( \mathrm{PD}(\mathrm{VR}^{\mathrm{glob}}(\mathcal{G})), \mathrm{PD}(\mathrm{VR}^{\mathrm{glob}}(\mathcal{G}'))\right)\\
    \leq~~&\| \mathcal{G} - \mathcal{G}'\|_{\infty} \leq \| \mathcal{G} - \mathcal{G}'\|_{1}= \bigO( \delta ).
\end{align*}
Combining these steps, we conclude that
\begin{align*}
    \|\mathrm{PI}(\witness^{\mathrm{glob}}(\mathcal{G})) - \mathrm{PI}(\witness^{\mathrm{glob}}(\mathcal{G}'))\|_{\infty} = \bigO( \delta + \epsilon).
\end{align*}

\end{proof}


\begin{repproposition}{thm:stab_WGTL}[Stability of the attention-driven node representation in Figure~\ref{fig:wgtl}]
If the landmarks selected for the Witness complex induce an $\epsilon$-net of the graph with $\epsilon > 0$, we obtain that for any graph perturbation $\|\mathcal{G}-\mathcal{G}'\|_1 = \bigO(\delta)$ the global topology encoding satisfies
\begin{align}
 \| {\boldsymbol{Z}}_{\mathrm{WGTL}} (\mathcal{G}) - {\boldsymbol{Z}}_{\mathrm{WGTL}} (\mathcal{G}') \|_{1} = \bigO(C_{\epsilon}(C_{\epsilon} + L_{GNN}) (\delta + \epsilon)^2).
\end{align}
\end{repproposition}
\begin{proof} We proceed with the proof in three steps.

\textit{Step 1: Decomposition to three components.} We begin the proof by decomposing the WGTL representation of $\graph$ into its three components. Specifically,
\begin{align*}
    {\boldsymbol{Z}}_{\mathrm{WGTL}} (\mathcal{G}) = \left(\alpha_{T_L} \boldsymbol{A}(\mathcal{G})  +  \alpha_G \boldsymbol{A}^{\mathrm{GNN}}(\mathcal{G}) \right) \boldsymbol{A}^{glob}(\mathcal{G}),
\end{align*}
where $\boldsymbol{A}(\graph) $ represents the adjacency matrix constructed from the local topology encoding $\boldsymbol{Z}_{T_L}$ of the nodes, $\boldsymbol{A}^{glob}(\graph) $ represents the adjacency matrix constructed from the global topology encoding $\boldsymbol{Z}_{T_G}$ of the nodes, $\boldsymbol{A}^{\mathrm{GCN}}$ represents the adjacency matrix constructed from the encoding $\boldsymbol{Z}_{G}^{(m+1)}$ of the nodes obtained from GNNs, and $\alpha_{T_L}$ and $\alpha_{G}$ are non-negative attention weights in $(0,1)$ as described in Section~\ref{sec:wgtl}.

\textit{Step 2: Stability of the three individual components.} In order to prove the stability of the WGTL representation, we first prove the stability results of each of the components.
    \begin{enumerate}
    \item  For the local PIs passing through transformer, we have from Equation~\eqref{eq:local_wit_adj}
    {\small
    \begin{align*}
        \| \boldsymbol{A}(\mathcal{G}) - \boldsymbol{A}(\mathcal{G}') \|_{1} 
       &\leq \gaussconst C_{\epsilon} \left( \|\mathcal{G}-\mathcal{G'}\|_{1} + 6 \log 3 + 4\epsilon\right).
    \end{align*}
    }
    The constants follow directly from Theorem~\ref{thm:stab_local}.
    
    \item For the global PIs passing through CNN, we have from Proposition~\ref{thm:stab_global}
    {\small
    \begin{align*}
        \| \boldsymbol{A}^{glob}(\mathcal{G}) - \boldsymbol{A}^{glob}(\mathcal{G}') \|_{\infty} 
        &\leq  \gaussconst \left(\| \mathcal{G} - \mathcal{G}'\|_{1} +6 \log 3 + 4\epsilon\right).
    \end{align*}
    }
    The constants follow the Proposition~\ref{thm:stab_global}.

    \item For the graph passing through GNN, following~\cite{gcnLips}, we obtain
    \begin{align*}
        \| \boldsymbol{A}^{\mathrm{GNN}}(\mathcal{G})  - \boldsymbol{A}^{\mathrm{GNN}}(\mathcal{G'}) \|_1
        &\leq L_{\mathrm{GNN}} \| \mathcal{G} - \mathcal{G'} \|_1.
    \end{align*}
    \end{enumerate}
\textit{Step 3: Merging the pieces together.}
{\small
\begin{align*}
    &\| {\boldsymbol{Z}}_{\mathrm{WGTL}} (\mathcal{G}) -{\boldsymbol{Z}}_{\mathrm{WGTL}} (\mathcal{G}') \|_1\\
    = &\| \left(\alpha_{T_L} \boldsymbol{A}(\mathcal{G})  +  \alpha_G \boldsymbol{A}^{\mathrm{GNN}} \right) \boldsymbol{A}^{glob}(\mathcal{G})\\
    &~~~- \left(\alpha_{T_L} \boldsymbol{A}(\mathcal{G}')  +  \alpha_G \boldsymbol{A}^{\mathrm{GNN}}(\mathcal{G}') \right) \boldsymbol{A}^{glob}(\mathcal{G}') \|_1\\
    \underset{(a)}{\leq} &\| \left(\alpha_{T_L} \boldsymbol{A}(\mathcal{G})  +  \alpha_G \boldsymbol{A}^{\mathrm{GNN}} \right) - \left(\alpha_{T_L} \boldsymbol{A}(\mathcal{G}')  +  \alpha_G \boldsymbol{A}^{\mathrm{GNN}}(\mathcal{G}') \right) \|_1\\
    &~~\times~\| \boldsymbol{A}^{glob}(\mathcal{G}) - \boldsymbol{A}^{glob}(\mathcal{G}')\|_{\infty}\\
    \underset{(b)}{\leq} &\left( \|  \boldsymbol{A}(\mathcal{G})  - A(\mathcal{G}') \|_1 +  \|  \boldsymbol{A}^{\mathrm{GNN}} - \boldsymbol{A}^{\mathrm{GNN}}(\mathcal{G}') \|_1 \right)\\
    &~~\times~\| \boldsymbol{A}^{glob}(\mathcal{G}) - \boldsymbol{A}^{glob}(\mathcal{G}')\|_{\infty}\\
    = &\bigO\left(\left(C_{\epsilon}(\delta+\epsilon) + L_{\mathrm{GNN}} \delta\right)C_{\epsilon}(\delta+\epsilon) \right)\\
    = &\bigO\left(\left(C_{\epsilon} + L_{\mathrm{GNN}}\right)C_{\epsilon}(\delta+\epsilon)^2 \right)
\end{align*}
}
Inequality (a) is due to H\"older's inequality. Inequality (b) is due to triangle inequality.

The final result is due to a direct application of the results in Step 2.
\end{proof}
\subsection{Stability of the Topological Loss}
In this section, we derive the stability bounds on the topological loss (Equation~\eqref{eq:topo_loss}) incurred due to perturbation of the underlying graph $\graph$ by an amount $\delta$. In order to derive the stability result (Proposition~\ref{prop:stab_tloss}), we first bound the topological loss incurred due to a graph $\graph$ and its dependence on the properties of the graph $\graph$ (Theorem~\ref{thm:bound_topoloss}).
\begin{theorem}[Boundedness of Topological Loss]\label{thm:bound_topoloss}
Let us assume that the cardinality of the $\epsilon$-neighborhood of any node in $\graph$ grows polynomially, i.e. $C_{\epsilon} = \bigO(\epsilon^{-M})$ for an $M > 0$.
If $m$ is the number of points in the persistence diagram, $k=\max\{p,q\}$ and $2k>M$, $ L_{topo, k}(\mathrm{T}(\mathcal{G}))$ is bounded, such that
{\small
\begin{align}\label{eq:bd_topo_loss}
    L_{topo, k}(\mathrm{T}(\mathcal{G}))\leq  \gaussconst^{2k} {\epsilon}^{-2kM} \mathrm{Diam}(A(\mathcal{G})) + 2^{k-2} m \mathrm{Diam}(\mathcal{G})^{2k}. 
\end{align}
}
\end{theorem}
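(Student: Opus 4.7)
The plan is to split the topological loss into two contributions by thresholding the persistence $d_i - b_i$ of the features in the persistence diagram $\mathrm{T}(\graph)$ of the auxiliary graph $A(\graph)$, and to bound each piece using a combination of pointwise inequalities and the polynomial growth assumption $C_{\epsilon} = \bigO(\epsilon^{-M})$.

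First I would obtain a pointwise bound on each summand $(d_i - b_i)^p\left(\frac{d_i+b_i}{2}\right)^q$. Writing $a_i = d_i - b_i$ and $c_i = (d_i+b_i)/2$, one has $a_i \leq 2c_i$ and $a_i c_i = (d_i^2 - b_i^2)/2 \leq d_i^2/2$. Using $p,q \leq k$ together with a weighted Young / AM-GM inequality, each term $a_i^p c_i^q$ can be dominated by a convex combination of $a_i^{2k}$ and $c_i^{2k}$, which after rearranging gives either a factor of $d_i^{2k}/2^k$ when $p=q=k$, or more generally a bound of the form $a_i \cdot \mathrm{Diam}(A(\graph))^{p+q-1}$ when one wants to isolate a linear-in-persistence factor. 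This lets me decouple a ``persistence'' factor from a ``diameter'' factor in every term.

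Next I would split the index set into a persistent set $S_P = \{i : a_i > \tau\}$ and a noisy set $S_N = \{i : a_i \leq \tau\}$ for a threshold $\tau$ to be chosen. For $S_P$: one uses $|S_P| \leq m$ and the bounds $b_i, d_i \leq \mathrm{Diam}(\graph)$ obtained by tracing the birth/death filtration values of $A(\graph)$ back to geodesic distances in $\graph$ through the stability chain of Theorem~\ref{thm:stab_local}. Combined with $p + q \leq 2k$, this yields the $2^{k-2}\, m\, \mathrm{Diam}(\graph)^{2k}$ contribution, with the $2^{k-2}$ factor arising from $a_i c_i \leq d_i^2/2$ raised to the $k$-th power. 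For $S_N$: the contribution is controlled by $\sum_{i \in S_N} a_i^{2k}$, which by an $L^{2k}$-stability argument for persistence images passing through the transformer scales like $(\gaussconst C_{\epsilon})^{2k}\,\mathrm{Diam}(A(\graph))$; substituting $C_{\epsilon} = \bigO(\epsilon^{-M})$ gives the $\gaussconst^{2k}\epsilon^{-2kM}\,\mathrm{Diam}(A(\graph))$ contribution.

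The main obstacle I expect is the per-point bookkeeping that keeps $\mathrm{Diam}(A(\graph))$ appearing only linearly in the first term while confining the heavier $\mathrm{Diam}(\graph)^{2k}$ dependence to the finite-cardinality persistent sum. The hypothesis $2k > M$ enters exactly here: it is the regime in which the $\epsilon^{-2kM}$-factor arising from $C_{\epsilon}^{2k}$ is absorbed against the linear diameter factor without blowing up the total. The threshold $\tau$ can then be tuned (e.g.\ $\tau \sim \epsilon^{M}$) to balance the two contributions and recover precisely the stated bound.
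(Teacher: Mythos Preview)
Your proposal has a genuine gap, and the thresholding scheme is a red herring.

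The paper does not split the index set by persistence magnitude at all. Instead it applies $2xy\le x^2+y^2$ and a binomial-type inequality uniformly to every summand, obtaining
\[
L_{topo,k}(\mathrm{T}(\graph))\;\le\;\sum_{i=1}^m (d_i-b_i)^{2k}\;+\;2^{q-1}\sum_{i=1}^m b_i^{2q},
\]
with no threshold $\tau$ and no partition $S_P\cup S_N$. The second sum is bounded trivially by $m\,\mathrm{Diam}(\graph)^{2k}$ since each $b_i\le \mathrm{Diam}(\graph)$.

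The piece you are missing is how to bound the first sum, $\sum_i (d_i-b_i)^{2k}$. This is handled not by any ``$L^{2k}$-stability argument for persistence images'' but by the total-persistence theorem of Cohen--Steiner, Edelsbrunner, Harer and Mileyko: for a Lipschitz tame function $f$ on a triangulable compact metric space whose $\epsilon$-balls grow like $\bigO(\epsilon^{-M})$, the degree-$d$ total persistence $\sum_i (d_i-b_i)^{d}$ is finite and bounded by $C_{Dom}\,\mathrm{Lip}(f)^{d}$ whenever $d>M$. Applying this with $d=2k$ and $\mathrm{Lip}(A(\graph))\le \gaussconst\,C_{\epsilon}$ gives the $\gaussconst^{2k}\epsilon^{-2kM}\mathrm{Diam}(A(\graph))$ term directly. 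This is exactly where the hypothesis $2k>M$ enters: it is the finiteness condition in that theorem, not a balancing condition for a tuned threshold. Your explanation that $2k>M$ lets the $\epsilon^{-2kM}$ factor be ``absorbed against the linear diameter factor'' via choosing $\tau\sim\epsilon^{M}$ is incorrect, and without the Cohen--Steiner et al.\ lemma your $S_N$ step does not produce the stated bound.
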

\begin{proof} We proceed with the proof in three steps.

\textit{Step 1: Decomposition.} First, we decompose the impacts of persistence and birth of complexes on the topological loss by applying a series of algebraic inequalities.
\begin{align}
    &~~~~L_{topo, k}(\mathrm{T}(\mathcal{G})) \\
    &= \sum_{i=1}^m (d_i - b_i)^p \left(\frac{d_i+b_i}{2} \right)^{q} \notag\\
    &\underset{(a)}{\leq} \sum_{i=1}^m \frac{(d_i - b_i)^{2p}}{2} + \frac{1}{2}\left(\frac{d_i+b_i}{2} \right)^{2q}\notag\\
    &\underset{(b)}{\leq} \sum_{i=1}^m \frac{(d_i - b_i)^{2p}}{2} + {2}^{q-1} \left(b_i^{2q} + \left(\frac{d_i-b_i}{2}\right)^{2q} \right)\notag\\
    &\leq \sum_{i=1}^m \left(1+\frac{1}{2^{q}}  \right)\frac{(d_i - b_i)^{\max\lbrace 2p, 2q\rbrace}}{2} + {2}^{q-1} \sum_{i=1}^m b_i^{2q}\notag\\
    &= \frac{1}{2}\left(1+\frac{1}{2^{q}}  \right)\sum_{i=1}^m (d_i - b_i)^{\max\lbrace 2p, 2q\rbrace} + {2}^{q-1} \sum_{i=1}^m b_i^{2q}\notag\\
    &\underset{(c)}{\leq} \sum_{i=1}^m (d_i - b_i)^{2k} + {2}^{q-1} \sum_{i=1}^m b_i^{2q}.\label{eq:bd_loss}
\end{align}
Step (a) is due to the fact that $2xy \leq x^2 + y^2$ for all $x,y \in \mathbb{R}$.

Step (b) holds true as $\left(\frac{x+y}{2}\right)^{2q} = \left(\frac{x-y}{2} + y\right)^{2q} \leq 2^{q-1} \left(\left(\frac{x-y}{2}\right)^{q} + y^{q} \right)^{2} \leq 2^{q} \left( \left(\frac{x-y}{2}\right)^{2q} + y^{2q} \right)$ for $x, y \geq 0$ and $q\geq 1$.

The last inequality (c) holds due to the fact that for any $q\geq 0$, $\frac{1}{2}\left(1+\frac{1}{2^{q}}  \right) \leq 1$.

\textit{Step 2: Controlling impacts of persistence and births.}

Due to \citep{cohen2010lipschitz}, we know that degree $2k$-total persistence for any Lipschitz function $f$ over a triangulable compact metric space ${Dom}$ is upper bounded by $C_{{Dom}} \mathrm{Lip}(f)^{2k}$ for $2k >M$ is bounded by
{\small
\begin{align}
    \sum_{i=1}^m (d_i - b_i)^{2k} \leq C_{A(\mathcal{G})} \mathrm{Lip}(A(\mathcal{G}))^{2k} \leq \gaussconst^{2k} C_{\epsilon}^{2k} \mathrm{Diam}(A(\mathcal{G})).\label{eq:control_persist}
\end{align}
}
The second inequality is due to the Lipschitzness property of persistence images~\citep[Theorem 4]{persistence_images} applied on the local persistence images calculated on an $\epsilon$-neighborhood. Thus, $\mathrm{Lip}(\boldsymbol{A}(\mathcal{G})) \leq \gaussconst C_{\epsilon}$, where $C_{\epsilon}$ is the maximum cardinality of the $\epsilon$-neighborhoods induced by the landmarks.

We also know that the birth and death of topological features on a graph $\graph$ is upper bounded by the diameter of the graph $\mathrm{Diam}(\graph)$, as mentioned in Section~\ref{sec:background}. Thus,
\begin{align}
    \sum_{i=1}^m b_i^{2q} \leq m \mathrm{Diam}(\mathcal{G})^{2q} \leq m \mathrm{Diam}(\mathcal{G})^{2k}.\label{eq:control_birth}
\end{align}

\textit{Step 3: Merging it together.} 
Finally, by applying Equation~\eqref{eq:control_persist} and~\eqref{eq:control_birth} in Equation~\eqref{eq:bd_loss}, we get
{\small
\begin{align*}
    L_{topo, k}(\mathrm{T}(\mathcal{G})) = \\\bigO\left(\gaussconst^{2k} {\epsilon}^{-2kM} \mathrm{Diam}(A(\mathcal{G})) + 2^{k-1} m \mathrm{Diam}(\mathcal{G})^{2k}\right).
\end{align*}
}
\end{proof}

\begin{repproposition}{prop:stab_tloss}[Stability of Topological Loss]
Let us assume that the cardinality of the $\epsilon$-neighborhood of any node in $\graph$ grows polynomially, i.e. $C_{\epsilon} = \bigO(\epsilon^{-M})$ for an $M > 0$.
If $m$ is the number of points in the persistence diagram, $k=\max\{p,q\}$ and $2k>M$, $ L_{topo, k}(\mathrm{T}(\mathcal{G}))$ is bounded, such that
\small{
\begin{align*}
    &\big| L_{topo, k}(\mathrm{T}(\mathcal{G})) - L_{topo, k}(\mathrm{T}(\mathcal{G'}))\big| =\notag\\
    &\bigO\left(C_{\epsilon}\left(\gaussconst^{2k} {\epsilon}^{-2kM} \mathrm{Diam}(\boldsymbol{A}(\mathcal{G})) + 2^{k-1} m \mathrm{Diam}(\mathcal{G})^{2k}\right) \|\mathcal{G}-\mathcal{G'}\|_{1}\right)
\end{align*}
}
\end{repproposition}

\begin{proof}
By Theorem~\ref{thm:bound_topoloss}, we know that $L_{topo, k}(\mathrm{T}(\mathcal{G}))$ is a bounded function, where the upper bound depends on diameter of the graph $\graph$ and the corresponding local topology encoding $\boldsymbol{A}(\mathcal{G})$. For simplicity, let us denote the bound by $B(\graph)$.

Since every bounded function is Lipschitz with the worst possible Lipschitz constant being the bound itself, we get
\begin{align*}   
&|L_{topo, k}(\mathrm{T}(\mathcal{G})) - L_{topo, k} (\mathrm{T}(\mathcal{G}'))| \\
\leq~~&B(\graph) \|\boldsymbol{A}(\mathcal{G})-\boldsymbol{A}(\mathcal{G'})\|_{\infty}
    = \bigO\left(B(\graph) C_{\epsilon} \|\graph-\graph'\|_{1}\right).
\end{align*}
The last statement holds true due to Equation~\ref{eq:local_wit_adj}.
    We conclude the proof by replacing $B(\graph)$ with the exact expression in Equation~\ref{eq:bd_topo_loss}.
\end{proof}

\section{Details of Adversarial Attacks: Configurations and Budgets}\label{app:attacks} There are many adversarial attacks in the literature~\cite{gnnsurvey2022,survey2021,mettack,metapgd,nettack,wu2019adversarial}. In this paper, we validate our method for poisoning-type attacks. Other types of attacks, such as node-injection~\cite{wang2018attack}, reinforcement-learning-based attacks~\cite{dai2018adversarial} and attacks on node-embeddings~\cite{bojchevski2019adversarial} are left as future work.

In this paper, we focus on four different local and global poisoning attacks to evaluate the robustness of our proposed WGTL, and consider a fixed GCN without weight re-training as the surrogate for all attacks. 
All attacks are considered under a non-adaptive setting, meaning it is assumed that the attacker can not adapt or interact with the model during the attack process.
In all the poisoning attacks, we vary the perturbation rate, i.e., the ratio of changed edges, from 0\% to 10\%
with a step of 5\%. 

\textbf{Global Poisoning Attacks.} Among global (non-targeted) poisoning attacks, we consider mettack~\citep{mettack} and two different topological attacks: PGD~\citep{pgdattack} and its more recent adaptation Meta-PGD~\citep{metapgd}. Mettack treats the graph as a hyperparameter and greedily selects perturbations based on meta-gradient for node pairs until the budget is exhausted. PGD attack~\citep{pgdattack} uses the Projected Gradient Descent algorithm with the constraint $\|\boldsymbol{S}\|_0 \leq \delta$ to minimise attacker loss. Here, $\boldsymbol{S}$ is a binary symmetric matrix with $\boldsymbol{S}_{ij} = 1$ if the $(i,j)$-th entry of the Adjacency matrix is flipped by the attacker, and 0 otherwise. Recently,~\cite{metapgd} proposes to apply PGD on the meta-gradients to design attacks stronger than the greedy mettack. Meta-PGD has been shown to be more effective than mettack in many cases~\citep{metapgd}. Hence, we consider it a more challenging poisoning attack for the proposed method. 

To perform mettack, we keep all the default parameter settings (e.g., $\lambda=0$) following the original implementation~\citep{mettack}. For Cora-ML, Citeseer and Polblogs, we apply the Meta-Self variant of mettack since it is the most effective mettack variant, while for Pubmed, the approximate version of Meta-Self, A-Meta-Self, is applied to save memory and time~\citep{prognn}. We perform the PGD attack with the CE-type attacker loss function. Following the implementation~\citep{pgdattack}, we keep their default parameter settings, i.e., the number of iterations $T = 200$ and learning rate $\eta_t = 200/\sqrt{t}$. For Meta-PGD, we keep the same parameter settings as~\cite{metapgd}, i.e., a learning rate of 0.01 and gradient clipping threshold of 1.

\textbf{Local Poisoning Attack.} Among local attacks, we use nettack~\citep{nettack}. Nettack is a targeted attack which first selects possible perturbation candidates not violating degree distribution and feature co-occurrence of the original graph. Then, it greedily selects, until the attacker's budget is exhausted, the perturbation with the largest score to modify the graph. The score function is the difference in the log probabilities of a target node. 

Following~\citet{nettack}, we vary the number of perturbations made on every targeted node from 1 to 2
with a step size of 1. Following~\cite{prognn}, the nodes in the test set with a degree $>10$ are treated as target nodes. We only sample 10\% of them to reduce the running time of nettack on Pubmed, while for other datasets, we use all the target nodes.
\section{Experimental Details}\label{app:details}

\noindent\textbf{Experimental Setup.} All experiments are run on a server with 32 Intel(R) Xeon(R) Silver 4110 CPU @ 2.10GHz processors, 256 GB RAM, and an NVIDIA GPU card with 24GB GPU memory. All models are trained on a single GPU. 

\noindent\textbf{Datasets.} Following~\citet{nettack,mettack,prognn}, we validate the proposed approach on 6 benchmark datasets: 5 homophilic and 1 heterophilic graph. The homophilic graph includes three citation graphs: Citeseer, Cora, and Pubmed, one blog graph: Polblogs, and one large-scale graph from Open-Graph Benchmark (OGB): OGBN-arXiv~\cite{ogbn}. As heterophilic graph dataset, we consider Snap-patents~\cite{snap}. For each graph, we randomly choose
10\% of nodes for training, 10\% of nodes for validation, and the remaining 80\% of nodes for testing. For each experiment, we report
the average accuracy of 10 runs. Note that in the
Polblogs graph node features are not available. Following~\citet{prognn}, we set the attribute matrix to $N \times N$ identity matrix.  
\setlength\tabcolsep{1pt}
\begin{table}[t]
\small
\centering
\begin{tabular}{ccccc}
\toprule
\textbf{Dataset} & \textbf{\begin{tabular}[c]{@{}c@{}}\#nodes (LCC)\end{tabular}} & \textbf{\begin{tabular}[c]{@{}c@{}}\#edges (LCC)\end{tabular}} & \textbf{\#classes} & \textbf{\#features} \\ \hline
Cora-ML          & 2,485                                                            & 5,069                                                            & 7                  & 1,433               \\
Citeseer         & 2,110                                                            & 3,668                                                            & 6                  & 3,703               \\
Pubmed           & 19,717                                                           & 44,338                                                           & 3                  & 500                 \\
Polblogs         & 1,222                                                            & 16,714                                                           & 2                  & None                \\ 
OGBN-arXiv & 169,343 & 1,157,799                                                          & 40                  & 128                \\ 
Snap-patents & 4562 & 12103 & 5 & 269 \\
\bottomrule
\end{tabular}
\caption{Dataset statistics: only the largest connected component (LCC) is considered.}
\label{tab:dataset}
\end{table}

\section{Ablation studies}\label{app:ablation}
To evaluate the contributions of the different components in our WTGL, we perform ablation studies on Cora-ML and Polblogs datasets under a global attack, i.e., mettack, and a local attack, i.e., nettack. We use GCN as the backbone architecture and consider three ablated variants: (i) GCN+Local Topology Encoding (LTE), (ii) GCN+Global Topology Encoding (GTE), and (iii) GCN+LTE+GTE+Topological Loss (TopoLoss) (i.e., GCN + WTGL). 

The experimental results for mettack and nettack are shown in Table~\ref{abl_mettatack_res_cora_ml} and Table~\ref{abl_nettack_res_cora_ml} respectively. Consistent improvement from the backbone GCN while using GTE, LTE, and topological loss together suggest their importance in an individual as well as in an aggregated manner. 

\begin{table}[h!]
\small
\centering
\resizebox{\columnwidth}{!}{
\begin{tabular}{llcccccc}
\toprule
\multirow{2}{*}{\textbf{Dataset}}& \multirow{2}{*}{\textbf{Model}}& \multicolumn{3}{c}{\textbf{Perturbation rate}}
\\
\cmidrule(lr){3-5}
&          & 0\% & 5\% & 10\% \\
\midrule
\multirow{4}{*}{{Cora-ML}}& GCN &82.87$\pm$0.83 &76.55$\pm$0.79 &70.39$\pm$1.28 \\
&GCN + LTE & 83.26$\pm$0.43 &77.35$\pm$0.38 &71.27$\pm$0.81 \\
&GCN + GTE &83.37$\pm$1.12 &77.78$\pm$0.59 &70.66$\pm$1.76 \\
&GCN + LTE + GTE + TopoLoss & {\bf 83.83$\pm$0.55} & {\bf 78.63$\pm$0.76} &{\bf 73.41$\pm$0.82} \\
\midrule
\multirow{4}{*}{{Polblogs}}& GCN & 94.40$\pm$1.47 & 71.41$\pm$2.42 & 69.16$\pm$1.86 \\
&GCN + LTE  & {95.34$\pm$0.73} &72.27$\pm$1.07 & 72.02$\pm$0.97 \\
&GCN + GTE & 95.07$\pm$0.09 & 72.78$\pm$0.57 & 73.14$\pm$1.59 \\
&GCN + LTE + GTE + TopoLoss & {\bf 95.95$\pm$0.15} & {\bf72.84$\pm$0.86} & {\bf74.62$\pm$0.42} \\
\bottomrule
\end{tabular}
}
\caption{Ablation studies (Accuracy$\pm$Std) on datasets under Mettack.\label{abl_mettatack_res_cora_ml}}
\end{table}

\begin{table}[h!]
\small
\centering
\resizebox{\columnwidth}{!}{
\begin{tabular}{llcccccc}
\toprule
\multirow{2}{*}{\textbf{Datasets}}& \multirow{2}{*}{\textbf{Model}}& \multicolumn{3}{c}{\textbf{Number of perturbations per node}}
\\
\cmidrule(lr){3-5}
      &    & 0 & 1 & 2 \\
\midrule
\multirow{4}{*}{{Cora-ML}}& GCN &82.87$\pm$0.93 & 82.53$\pm$1.06 & 82.08$\pm$0.81  \\
& GCN + LTE & 82.88$\pm$0.24 & 82.56
$\pm$0.13
 & 82.33$\pm$0.33 \\
& GCN + GTE & 83.15$\pm$0.43 & 82.42$\pm$0.66 & 82.29
$\pm$0.48  \\ 
& GCN + LTE + GTE + TopoLoss & {\bf 83.83$\pm$0.55} & {\bf 83.41$\pm$0.87} & {\bf 82.74$\pm$0.65}  \\
\midrule
\multirow{4}{*}{{Polblogs}} & GCN & 94.40$\pm$1.48 &88.91$\pm$1.06 & 85.39$\pm$0.86  \\
& GCN + LTE  & 95.42$\pm$0.58 & 91.45$\pm$0.56 & 88.40$\pm$0.94 \\
& GCN + GTE &  95.07$\pm$0.11 & 91.47
$\pm$0.68 & 89.10$\pm$0.70 \\
& GCN + LTE + GTE + TopoLoss &{\bf 95.95$\pm$0.15} & {\bf 91.47$\pm$0.33} & {\bf 89.10$\pm$0.69} \\
\bottomrule
\end{tabular}
}
\caption{Performance (Accuracy$\pm$Std) on Cora-ML under Nettack.\label{abl_nettack_res_cora_ml}}
\end{table}

\section{Impact of landmark selection algorithm} 
The pseudocode for selecting landmarks for computing global witness topological features and local witness topological features is presented in Algorithm~\ref{alg:landmark}. In order to compute global witness features, we select a set of \emph{global landmark} nodes. In order to compute local witness features, we select a set of \emph{local landmark} nodes for each node in the graph.
\begin{algorithm}[h!]
\caption{{\bf Greedy} Landmark selection algorithm}
  \label{alg:landmark}
  \small
     \textbf{Input:} Graph $\mathcal{G}=(\nodes,\edges)$, $\%$ of nodes as landmarks $p \in (0,1)$\\
    \textbf{Output:} Global landmark set $\landmarks_g$ and Local landmark set $\landmarks$ 
  \begin{algorithmic}[1]
    \STATE Number of landmarks $n_g \gets |\nodes| \cdot p$
    \STATE Sort $\nodes$ in decreasing order of node degrees.
    \STATE Select Global Landmarks $\landmarks_g \gets \nodes[1,2,\ldots,n_g]$
    \FORALL{$l \in \landmarks_g$} 
    \STATE Compute cover $C_l \gets \{u \in \nodes: d_{\graph}(u,l) < d_{\graph}(u,l')\quad\forall l' \in \landmarks_g\setminus\{l\} \}$
    \STATE Compute Subgraph $G_l \gets \mathcal{G}[C_l]$
    \STATE Number of local landmarks $n_l \gets |C_l|\cdot p$
    \STATE Sort $C_l$ in decreasing order of node degrees in $G_l$.
    \STATE Select Local landmarks $\landmarks[l] \gets C_l[1,\ldots,n_l]$
    \ENDFOR
    \RETURN $\landmarks_g, \landmarks$
\end{algorithmic}
\end{algorithm}

In Lines~1-3, we select the set of global landmarks $\landmarks_g$ in order to construct Global witness filtration. We select the top-most $p\%$ highest degree nodes in the graph $\mathcal{G}$ as landmarks.

In Lines~4-9, we select local landmarks corresponding to each global landmark in order to compute the topological features local to each global landmark. A node $u$ that is not a global landmark must be in the cover of some landmark node $l \in \landmarks_g$. We say $u$ is a witness node to the node $l$. We assume the local topological signature does not change inside a cover. In other words, a witness node has the same topological signature as its associated landmark. That is why, instead of computing local landmarks for every node in $\nodes$, we compute only for the global landmarks $\landmarks_g \subseteq \nodes$ (line~4). For each global landmark $l \in \landmarks_g$, we construct its cover $C_l$ in line~5 consisting of all its witness nodes. In line~6, we construct the subgraph $\mathcal{G}[C_l]$ induced by the witness nodes. Finally, in lines~7-9, we select the topmost $p\%$ of the witness nodes with the highest degrees in the induced subgraph $\mathcal{G}[C_l]$ as the local landmark for $l \in \landmarks_g$.


\subsection{Impact of the Number of Landmarks} 
It is well-known that the quality of the Witness complex-based topological features is dependent on the number of landmarks~\citep{witness,arafat2019topological}. Hence, the performance of the proposed witness topological encodings, topological loss, and, finally, the downstream classification quality is also dependent on the number of landmarks. 

In order to study the accuracy and efficiency of WGTL under different numbers of landmarks, we use Algorithm~\ref{alg:landmark} to select ${1\%,10\%, 50\%}$ nodes as landmarks, and in Figure~\ref{fig:ablation_land}, we present the accuracy and computation time of the local and global witness complex-based features on Cora-ML and Citeseer datasets. We observe that increasing the number of landmarks indeed slightly increases the accuracy, albeit with the expense of increased computation time. Due to this trade-off between accuracy and efficiency, the selection of an optimal number of landmarks is dependent on how much robustness is desired by a user within a given computation-time budget. 
\begin{figure}[t]
    \centering
    \subfigure[]
	{
        \includegraphics[width=0.7\columnwidth]
        {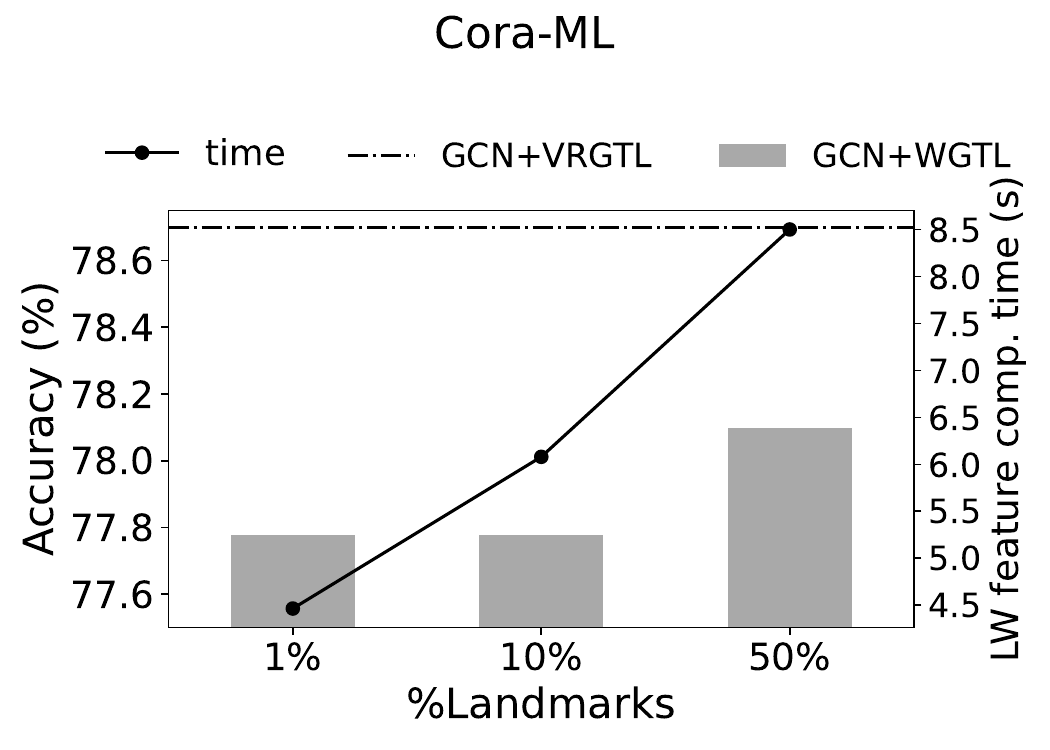}\label{fig:cora_land}
    }
    \subfigure[]
	{
        \includegraphics[width=0.7\columnwidth]
        {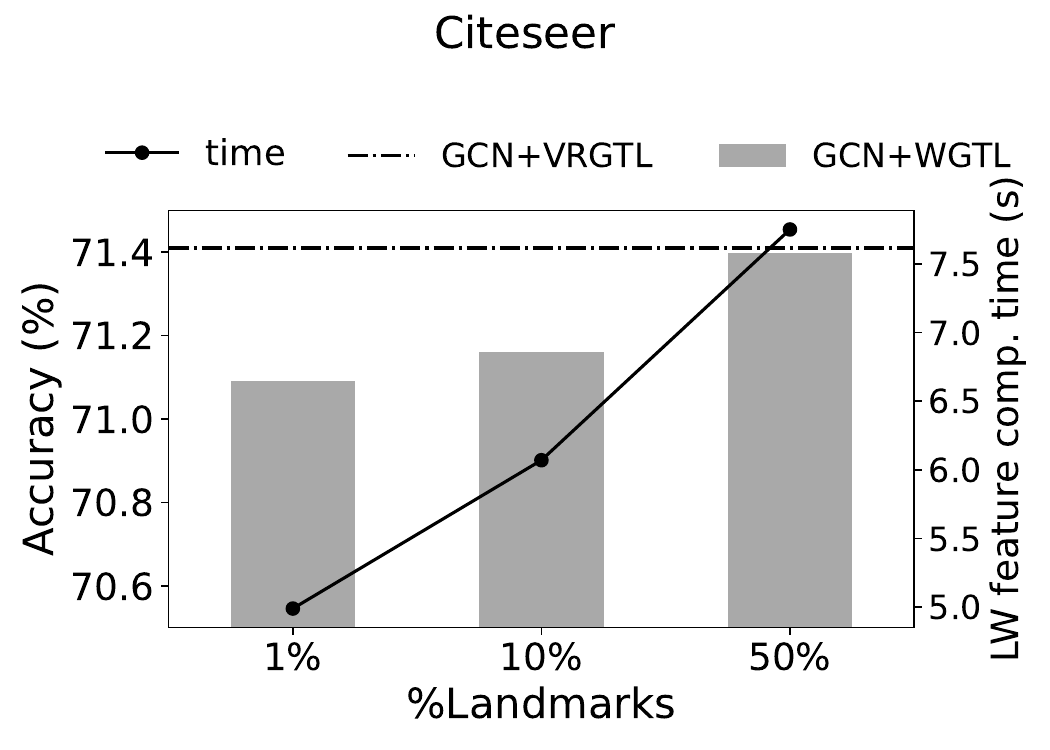}\label{fig:cite_land}
    }
    \caption{The trade-off between accuracy and Feature computation time of GCN+WGTL with different numbers of landmarks. The figures are under mettack with $5\%$ perturbation rate.}\label{fig:ablation_land}
\end{figure}
\begin{table}[t]
    \centering
    \resizebox{\columnwidth}{!}{
    \begin{tabular}{cccc}
    \toprule
         & Method & Cora-ML &  Citeseer \\ 
        \midrule
        \multirow{2}{*}{VR feature comp. time (s)} & CPU (Ripser) &  
        23286.0 & 1698.0 \\ 
         & GPU (Ripser++) &  
        13021.5 & 1430.6 \\ 
\midrule
        Witness feature comp. time (s) & Algorithm 1 & 8.5  & 7.8\\
        \bottomrule
    \end{tabular}
    }
    \caption{Execution times (in seconds) for computing global and local topological features. Landmark selection time is included.}
    \label{tab:tm_global}
\end{table}
\subsection{Comparison of WGTL and Vietoris-Rips based Topology Encoding (VRGTL)}
Finally, we also observe that, on these datasets, the accuracy achieved by WGTL with 50\% landmarks is close to the accuracy achieved by adopting Vietoris-Rips-based topological feature encoding, as indicated by the dotted line representing GCN+VRGTL. A more in-depth comparison among GCN, GCN+WGTL and GCN+VRGTL is presented in Figure~\ref{fig:ablation_land} where we compare their accuracy on Cora-ML and Citeseer under mettack. We observe that the accuracy of GCN+VRGTL is comparable to that of GCN+WGTL. These observations also highlight the flexibility of WGTL in adopting other approximate topological features. 

However, computing Vietoris-Rips features is significantly more expensive than witness topological features~\citep{arafat2020epsilon}. We computed Vietoris-Rips features using both CPU and GPU based state-of-the-art implementations: Ripser~\cite{ripser} and Ripser++~\cite{ripser++}. As shown in Table~\ref{tab:tm_global}, adopting CPU-based Vietoris-Rips PH computation into our computation pipeline incurs more than 2500x (200x) more computation time on Cora-ML (Citeseer) compared to using Witness PH computation. GPU acceleration using Ripser++ is more computationally viable than CPU-based implementation yet 1500x and 180x slower than Witness feature computation on Cora-ML and Citeseer respectively. 

\textit{The results demonstrate that instead of incurring 2 to 3 order of magnitude less computational time, deploying  WGTL leads to similar or better accuracy to those of VRGTL across a wide range of perturbations.}
\section{Experimental Results: WGTL against Different Attacks and Graph Structures}
\label{app:moreresults}
\subsection{Performance of WGTL on targetted attack: Nettack}
\begin{table}[!h]
\setlength\tabcolsep{1pt}
\small
\centering
\setlength\tabcolsep{1pt}
\resizebox{\columnwidth}{!}{
\begin{tabular}{@{}cccccc@{}}
\toprule
\multirow{2}{*}{\textbf{Dataset}}&\multirow{2}{*}{\textbf{Model}}& \multicolumn{3}{c}{\textbf{Number of perturbations per node}}
\\
\cmidrule(lr){3-5}
           & & 0 & 1 & 2\\
\midrule
\multirow{4}{*}{{Cora-ML}} 
& GCN + GNNGuard & 83.21 $\pm$0.34& 82.81$\pm$0.43& 82.51$\pm$0.26 \\
& GCN + GNNGuard + WGTL (ours) & {\bf 84.78$\pm$0.43} &{\bf 84.25$\pm$0.73} & {\bf 83.74$\pm$0.96}  \\
\cmidrule(lr){2-5}
& SimP-GCN &79.52$\pm$1.81 & 74.75$\pm$1.40 & 70.87$\pm$1.70 \\
& SimP-GCN + WGTL (ours) &  \bf{81.49$\pm$0.52} &  \bf{ 76.65$\pm$0.65} &  \bf{72.88$\pm$0.83} \\
\hline
\multirow{4}{*}{{Citeseer}}& GCN + GNNGuard & 71.82$\pm$0.43 & 71.87$\pm$0.53 & 71.71$\pm$0.32 \\
&GCN + GNNGuard + WGTL (ours) & {\bf 72.53$\pm$0.35} & {\bf 72.43$\pm$0.48} & {\bf 71.95$\pm$0.88} \\
\cmidrule(lr){2-5}
& SimP-GCN & 73.73$\pm$1.54 & 73.06$\pm$2.09 & 72.51$\pm$1.25  \\
& SimP-GCN + WGTL (ours) &  \bf{74.32$\pm$0.19
} &  \bf{74.05$\pm$0.71} &  \bf{73.09$\pm$0.50} \\
\hline
\multirow{4}{*}{{Polblogs}}
& GCN + GNNGuard & 95.03$\pm$0.25 & 91.43$\pm$0.36 & 89.45$\pm$0.46 \\
& GCN + GNNGuard + WGTL (ours) & {\bf 96.22$\pm$0.25}& {\bf 91.89$\pm$0.57} & {\bf 90.35$\pm$0.81}  \\
\cmidrule(lr){2-5}
& SimP-GCN & 89.78$\pm$6.47 & 65.75$\pm$5.03 & 61.53$\pm$6.41 \\
&SimP-GCN+WGTL (ours) & \bf{94.56$\pm$0.24} & \bf{69.78$\pm$4.10} & \bf{69.55$\pm$4.42}  \\
\bottomrule
\end{tabular}
}
\caption{\label{SOTA_nettack}Performances  wrt existing defenses under nettack}
\end{table}
\begin{table}[!h]
\setlength\tabcolsep{1pt}
\small
\centering
\setlength\tabcolsep{1pt}
\resizebox{\columnwidth}{!}{
\begin{tabular}{@{}ccccc@{}}
\toprule
\textbf{Dataset}&\textbf{Model}& \multicolumn{3}{c}{\textbf{Number of perturbations per node}}
\\ \midrule
           & & 0 & 1 & 2 \\
\midrule
\multirow{10}{*}{{Cora-ML}} & GCN & 82.87$\pm$0.93 & 82.53$\pm$1.06 & 82.08$\pm$0.81  \\
& GCN + WGTL (ours) & {\bf 83.83$\pm$0.55} & $^*${\bf 83.41$\pm$0.87} & {\bf 82.74$\pm$0.65} 
\\
\cmidrule(lr){2-5} & Chebnet & 80.74$\pm$0.42  & 79.36$\pm$0.67& 77.89$\pm$0.46 \\
& Chebnet+WGTL (ours) & {\bf 82.96$\pm$1.08}  &{\bf 82.90 $\pm$1.14} & {\bf 82.47 $\pm$ 0.96}\\
\cmidrule(lr){2-5} & GAT & 84.25$\pm$0.67	 & 79.88$\pm$1.09 & 72.63$\pm$1.56	\\
& GAT + WGTL (ours) & $^*${\bf 86.07$\pm$2.10} & {\bf 80.80$\pm$0.87} & {\bf 75.80$\pm$0.79} \\
\cmidrule(lr){2-5}
& GraphSAGE & 81.01$\pm$0.27 & 80.48$\pm$0.71 &	80.19$\pm$0.49\\
& GraphSAGE + WGTL (ours) & {\bf 83.63$\pm$0.35} & {\bf 83.23$\pm$0.21} &  $^*${\bf 82.79$\pm$0.36} \\
\hline 
\multirow{8}{*}{{Polblogs}}& GCN & 94.40$\pm$1.48 &88.91$\pm$1.06 & 85.39$\pm$0.86 \\
& GCN + WGTL (ours) & 
$^*${\bf95.95$\pm$0.15} & $^*${\bf 91.47$\pm$0.33} & {\bf 89.10$\pm$0.69}  \\
\cmidrule(lr){2-5} & Chebnet &73.10$\pm$7.13 &65.92$\pm$5.77 & 63.19$\pm$ 5.02\\
& Chebnet+WGTL (ours) & {\bf 92.50$\pm$1.10} & {\bf 89.33$\pm$0.62} & {\bf 88.24$\pm$0.64} \\
\cmidrule(lr){2-5}
& GAT & 95.28$\pm$0.51 & 89.86$\pm$0.63 &	86.44$\pm$1.47  \\
&GAT + WGTL (ours) &{\bf 95.87$\pm$0.26} & {\bf 90.69$\pm$0.51} & {\bf 87.73$\pm$0.38} \\
\cmidrule(lr){2-5}
&GraphSAGE & 94.54$\pm$0.27 & 90.20$\pm$0.30 & 89.57$\pm$0.62 \\
&GraphSAGE + WGTL (ours) & {\bf 95.58$\pm$0.50} & {\bf 90.98$\pm$0.27} & $^*${\bf 89.95$\pm$0.78} \\
\bottomrule
\end{tabular}
}
\caption{Performance wrt GNN backbones (avg. accuracy$\pm$std.) under nettack}
\label{nettack_results2}
\end{table}
\subsection{Performance of WGTL with SGC backbone}
In the main paper, we have deployed WGTL with four GNN backbones: vanilla GCN, GAT, GraphSAGE and ChebNet. In order to test the versatility and flexibility of WGTL, we adopt WGTL with a more recent GNN, namely SGC~\cite{sgc2019} as backbone. We adopt the experimental setup and landmark selection scheme described earlier in Appendix~\ref{app:details}. For the attacks, we adopt the configurations and budgets as described earlier in Appendix~\ref{app:attacks}. Table~\ref{meta_sgc} and table~\ref{net_sgc} demonstrate the performance of the backbones with and without WGTL on three representative datasets: Cora-ML, Citeseer and Polblogs, and under two different representative attacks, i.e., mettack and nettack. \textit{We observe that incorporating WGTL improve the corresponding performance across a range of perturbation rates for both attacks.}
\begin{table}[t]
\small
\setlength\tabcolsep{2pt}
\centering
\resizebox{0.98\columnwidth}{!}{
\begin{tabular}{llcccccc}
\toprule
\multirow{2}{*}{\textbf{Dataset}}&\multirow{2}{*}{\textbf{Model}}& \multicolumn{3}{c}{\textbf{Perturbation Rate}}
\\
\cmidrule(lr){3-5}
           & & 0\% & 5\% & 10\% \\
\midrule
\multirow{2}{*}{{Cora-ML}} & SGC & 82.51$\pm$0.11 & 76.45$\pm$0.11 &	68.74$\pm$0.13 \\
& SGC + WGTL (ours) & {\bf 84.25$\pm$0.06} & {\bf 77.40$\pm$0.73} & {\bf 69.07$\pm$0.46}  \\
\hline
\multirow{2}{*}{{Citeseer}}& SGC & \bf 71.68 $\pm$0.11 & 68.14$\pm$0.17 & 63.12$\pm$0.16 \\

&SGC + WGTL (ours) & 71.33$\pm$0.20 & {\bf 70.15$\pm$0.31} & {\bf 66.18$\pm$0.32}  \\
\hline
\multirow{2}{*}{{Polblogs}}& SGC & 90.08$\pm$0.26&	68.88 $\pm$0.29 &	65.73$\pm$0.15  \\
&SGC + WGTL (ours) & {\bf 95.52$\pm$0.05} & {\bf 74.40$\pm$0.09} & {\bf 70.35$\pm$0.63}  \\
\bottomrule
\end{tabular}
}
\caption{Performance (Accuracy$\pm$Std) of SGC and SGC+WGTL
under mettack.\label{meta_sgc}}
\end{table}
\begin{table}[t]
\small
\setlength\tabcolsep{1pt}
\resizebox{\columnwidth}{!}{
\begin{tabular}{llcccccc}
\toprule
\multirow{2}{*}{\textbf{Dataset}}&\multirow{2}{*}{\textbf{Model}}& \multicolumn{3}{c}{\textbf{Number of perturbations per node}}
\\
\cmidrule(lr){3-5}
           & & 0 & 1 & 2 \\
\midrule
\multirow{2}{*}{{Cora-ML}} & SGC & 82.51$\pm$0.11 & 81.91$\pm$0.15 & 80.93$\pm$0.17 \\
& SGC + WGTL (ours) & {\bf 84.25$\pm$0.06} & {\bf 83.85$\pm$0.12} & {\bf 83.35$\pm$0.16}  \\
\hline
\multirow{2}{*}{{Citeseer}}& SGC & \bf{71.68$\pm$0.11} & 71.42 $\pm$0.03 & 70.82 $\pm$0.11 \\
&SGC + WGTL (ours) & 71.33$\pm$0.20 & {\bf 71.45$\pm$0.29} & {\bf 71.16$\pm$0.37}  \\
\hline
\multirow{2}{*}{{Polblogs}}& SGC & 90.08$\pm$0.26&	82.62 $\pm$0.42 &	78.69$\pm$0.33  \\
&SGC + WGTL (ours) & {\bf 95.52$\pm$0.05} & {\bf 90.92$\pm$0.09} & {\bf 86.38$\pm$0.33} \\
\bottomrule
\end{tabular}
}
\caption{Performance (Accuracy$\pm$Std) of SGC and SGC+WGTL
under nettack.\label{net_sgc}}
\end{table}
\subsection{Performance of WGTL on other poisoning-type topological attacks: PGD and Meta-PGD}
From tables~\ref{pgd_results} and~\ref{mettpgd_results}, we observe that: (i) GCN+WGTL consistently outperforms GCN on these attacks which proves the effectiveness of our proposed WGTL,  and
(II) Meta-PGD is generally stronger attack than PGD for both GCN as well as GCN+WGTL. 
\begin{table}[!h]
\small
\centering
\setlength\tabcolsep{1pt}
\resizebox{\columnwidth}{!}{
\begin{tabular}{llcccccc}
\toprule
\multirow{2}{*}{\textbf{Dataset}}&\multirow{2}{*}{\textbf{Model}}& \multicolumn{3}{c}{\textbf{Perturbation Rate}}
\\
\cmidrule(lr){3-5}
           & & 0\% & 5\% & 10\% \\
\midrule
\multirow{2}{*}{{Cora-ML}} & GCN & 82.87$\pm$0.93 & 82.45$\pm$0.92 & 77.33$\pm$0.27 \\
& GCN + WGTL (ours) & {\bf 83.83$\pm$0.55} & {\bf 83.30$\pm$0.99} & {\bf 80.00$\pm$1.15} \\
\hline
\multirow{2}{*}{{Citeseer}}& GCN & 71.56$\pm$0.63 & 69.58$\pm$0.48 & 65.56$\pm$0.45 \\
&GCN + WGTL (ours) & {\bf 72.56$\pm$0.82} & {\bf 72.33$\pm$0.33} & {\bf 71.00$\pm$1.59} \\
\hline
\multirow{2}{*}{{Pubmed}}& GCN & 81.70$\pm$0.30 & 81.64$\pm$0.18 & 81.01$\pm$0.21 \\
& GCN + WGTL (ours) & {\bf83.93$\pm$0.06
} & {\bf 82.14$\pm$0.12} & {\bf 81.74$\pm$0.25} \\
\hline 
\multirow{2}{*}{{Polblogs}}& GCN & 94.40$\pm$1.48 & 91.17$\pm$2.27 & {89.92$\pm$1.43} \\
& GCN + WGTL (ours) & {\bf 95.95$\pm$0.15} & {\bf 91.45$\pm$0.51} & {\bf 90.02$\pm$1.16} \\ 
\bottomrule
\end{tabular}}
\caption{Performance (Accuracy$\pm$Std) 
under PGD-attack.\label{pgd_results}}
\end{table}
\begin{table}[t!]
\small
\centering
\setlength\tabcolsep{1pt}
\resizebox{\columnwidth}{!}{
\begin{tabular}{llcccccc}
\toprule
\multirow{2}{*}{\textbf{Dataset}}&\multirow{2}{*}{\textbf{Model}}& \multicolumn{3}{c}{\textbf{Perturbation Rate}}
\\
\cmidrule(lr){3-5}
           & & 0\% & 5\% & 10\% \\
\midrule
\multirow{2}{*}{{Cora-ML}} & GCN & 82.87$\pm$0.93 & 79.30$\pm$0.86 & 76.26$\pm$0.92  \\
& GCN + WGTL (ours) & {\bf 83.83$\pm$0.55} & {\bf 79.57$\pm$1.10} & {\bf 76.52$\pm$0.81} \\
\hline
\multirow{2}{*}{{Citeseer}}& GCN & 71.56$\pm$0.63 & 67.89$\pm$0.59 & 66.80$\pm$0.79  \\
&GCN + WGTL (ours) & {\bf 72.56$\pm$0.82} & {\bf 69.38$\pm$0.27} & {\bf 67.57$\pm$0.67} \\
\hline
\multirow{2}{*}{{Pubmed}}& GCN & 81.70$\pm$0.30 & 77.24$\pm$0.14 & 73.56$\pm$0.17 \\
&GCN + WGTL (ours) & {\bf 83.93$\pm$0.06} & {\bf 78.97$\pm$0.20} & {\bf 75.22$\pm$0.16} \\
\hline
\multirow{2}{*}{{Polblogs}}& GCN & 94.40$\pm$1.48 & 83.46$\pm$2.13 & 78.08$\pm$0.73 \\
&GCN + WGTL (ours) & {\bf 95.95$\pm$0.15} & {\bf 85.52$\pm$0.70} & {\bf 81.28$\pm$0.31} \\
\bottomrule
\end{tabular}}
\caption{Performance (Accuracy$\pm$Std) 
under Meta-PGD attack.\label{mettpgd_results}}
\end{table}
\subsection{Performance of WGTL on Heterophilic Graphs}
In the previous experiments, we have used four homophilic graph datasets: Cora-ML, Citeseer, Pubmed, and Polblogs. In this section, we aim to test the performance of WGTL and its poisoned variant WGTL$_\mathrm{P}$ on a heterophilic graph. Adopting the same attack configurations described in Appendix ~\ref{app:attacks}, we generate different perturbations (perturbation rates $0\%$ to $25\%$) of snap-patents graph~\citep{snap}. 

\textbf{Heterophilic Graph Dataset.} The snap-patents is a utility patent citation network. Node labels reflect the time the patent was granted, and the features are derived from the patent’s metadata. Following~\cite{kdd2022}, for better computational tractability, we sample a subset of the snap-patents data using a snowball sampling approach,
where a random 20\% of the neighbors for each traversed node are kept. 
The homophily ratio for this dataset is as low as $0.134$. Hence, \cite{kdd2022} used this dataset as a benchmark heterophilic graph to test the robustness of many GNN architectures, including those proposed for heterophilic graphs.

\textbf{Backbone GNN: H$_2$GCN.} Recently, it has been shown that GCN and other classical GNNs (e.g. GAT) perform poorly on heterophilic graphs~\citep{h2gcn2020,cpgnn}. More recently,~\cite{kdd2022} showed that these classical methods provide poor defence against adversarial attacks on heterophilic graphs as well. Thus, instead of GCN, we adopt  H${_2}$GCN~\citep{h2gcn2020} as the backbone architecture in this experiment.
H${_2}$GCN is proposed and popularly deployed to conduct classification on the heterophilic graphs.
H${_2}$GCN proposes a set of key design techniques to improve performance of GNNs on heterophilic graphs: (1) separation of ego- and
neighbor-embedding, (2) incorporation of higher-order neighborhoods, and (3) combination of
intermediate representations using skip-connections.

\begin{table}[!h]
\small
\centering
\resizebox{\columnwidth}{!}{
\begin{tabular}{@{}ccccc@{}}
\toprule
\textbf{Dataset} & \textbf{Models} & \multicolumn{3}{c}{\textbf{Perturbation Rate}} \\ \midrule
 &  & 0\% & 5\% & 10\% \\ \midrule
\multirow{4}{*}{\begin{tabular}[c]{@{}c@{}}snap-patents\end{tabular}} 
& GCN & 26.46$\pm$0.62 & 25.86$\pm$1.08 & 25.59$\pm$0.53 \\
 & H\textsubscript{2}GCN & 27.71$\pm$0.86 & 27.55$\pm$0.19 & 28.62$\pm$0.38 \\
 & H\textsubscript{2}GCN+WGTL (ours) & \bf{27.72$\pm$0.85} & \bf{28.66$\pm$1.68} & \bf{28.79$\pm$1.00} \\ 
& H\textsubscript{2}GCN+WGTL$_{\mathrm{P}}$ (ours) & 27.72$\pm$0.85 & 28.02$\pm$0.38 & 28.72$\pm$0.76 \\ 
\bottomrule
\end{tabular}%
}
\caption{Performance of WGTL on a heterophilic graph: snap-patents under Mettack.}
\label{tab:heterophi}
\end{table}
\textbf{Results and Observations.} For each perturbation rate, we run five experiments with H${_2}$GCN and H${_2}$GCN+WGTL on the corresponding perturbed snap-patents dataset, and report the mean$\pm$ standard deviation of the final classification accuracy in Table~\ref{tab:heterophi}. The results show that H${_2}$GCN+WGTL robustly improves the accuracy over H${_2}$GCN by up to $4\%$ across the perturbation rates. Note that the best-performing method APPNP~\citep{appnp}) on this dataset has been shown to have an accuracy of $27.76\%$ under $20\%$ perturbation (see Table~3 of~\citet{kdd2022}). Improving on that, we observe that H${_2}$GCN+WGTL achieves $28.21\%$ average accuracy under $20\%$ perturbation. Exploring other heterophilic graphs, and ways of further improving the performance of WGTL on heterophilic graphs are left as a future work. 
\begin{table}[!h]
\setlength\tabcolsep{2pt}
\centering
\begin{tabular}{ccc}
\toprule
\textbf{Models} & \multicolumn{2}{c}{\textbf{Perturbation Rate}}  \\ 
\midrule
   & 0\%	& 10\% \\
\midrule
GCN	& 81.68 $\pm$ 0.73 & 71.16 $\pm$ 1.61 \\
GCN+WGTL$_{\mathrm{P}}$ (ours) &\textbf{82.62 $\pm$ 0.77} & \textbf{72.75 $\pm$ 1.24} \\
\bottomrule
\end{tabular}
\caption{Performance of WGTL on Cora-ML under Node-feature attack using Mettack.}
\label{tab:feature}
\end{table}

\subsection{Performance of WGTL under attacks on node-features}
We demonstrate a way to use WGTL to handle attacks on node features. We propose to feed the $k$-nearest neighbor graph ($k$-NNG) constructed from node features rather than the input graph in Figure~\ref{fig:wgtl}. Traditionally, the $k$-NNG is a graph where two nodes $u$ and $v$ are connected by an edge if the geodesic distance $d_{\graph}(u,v)$ is among the $k$-th smallest distances from $u$ to all other nodes. To handle node-feature perturbation, instead of using $d_{\graph}(u,v)$, we propose to use the cosine distance between feature vectors $X_u$ and $X_v$, defined as $\left(1-\frac{X_u \cdot X_v}{\|{X_u}\|_2 \|{X_v}\|_2}\right)$. 

Table~\ref{tab:feature} presents our experimental results on the Cora-ML graph under Mettack where the cosine distance-based $k$NN graph has been adopted. This approach is generally effective, as the observation indicates that the GCN+WGTL$_{\mathrm{P}}$ always outperforms GCN across clean and perturbed graphs.

\subsection{Performance of WGTL under adaptive adversarial attacks}
Recently~\citet{metapgd} proposed to use custom
adaptive attacks as a gold standard for evaluating defence methods. Hence, instead of using linearised GCN as a surrogate, we used the proposed GCN+WGTL as a surrogate to stress-test the ability of the proposed method to defend against a more powerful attack. 

\textbf{Adaptive Attack Settings \& Baselines.} The defense model WGTL, along with its hyper-parameters, is assumed to be a white box to the attacker, including the topological features on the clean graph. We do not assume that the attacker has access to the topological features on the perturbed graph because the PH topological feature computation is not differentiable; hence, it is not possible for the attacker to back-propagate gradients and compute the derivative of the adjacency matrix with respect to the features. The adaptive attack used was dubbed \emph{Aux-attack} in the codebase of ~\citet{metapgd}, where they used PGD attack~\cite{pgdattack} adaptively with tanh logit margin as the attacker's loss function.

We evaluate the effectiveness of WGTL against Aux-attack in both evasion and poisoning settings using the dataset splits from~\citet{gadc}. In the evasion attack setting, we train GCN+WGTL with clean topological features, perform Aux-attack to compute the perturbed graph, and finally, during evaluation, we plug in the topological features computed on the perturbed graph to do inference on test nodes. In the poisoning attack setting, we train GCN+WGTL (with the same architecture as the surrogate) except with the topological features computed on the poisoned graph. Hence, the target model is effectively GCN+WGTL$_\mathrm{P}$ in the poisoning attack setting. Finally, we evaluate the trained GCN+WGTL$_\mathrm{P}$ on the test nodes. 

Apart from the GCN baseline, we have also tested more recently proposed two baselines in these settings: Graph adversarial diffusion convolution (GADC)~\cite{gadc} and GNNGuard~\cite{zitnikGNNGuard}.

\textbf{Results and Observations.} The results from one run on the Cora-ML and Polblogs dataset are presented in Table~\ref{tab:adaptive}. The results demonstrate that our method significantly outperforms the baselines on both evasion and poisoning adaptive attack settings. For instance, it achieves up to 5.7\% improvement over GADC(IV) on Cora-ML and up to 69\% improvement on polblogs. 

\begin{table}[t]
\setlength\tabcolsep{2pt}
\resizebox{0.98\columnwidth}{!}{
\begin{tabular}{@{}cccc@{}}
\toprule
\textbf{Dataset} & \textbf{Model} & \textbf{Evasion Attack} & \textbf{Poisoning Attack} \\ \midrule
\multirow{6}{*}{Cora-ML} & GCN & 60.06 & 60.46 \\
 & GCN+WGTL$_\mathrm{P}$ (ours) & \textbf{60.31} & \textbf{63.48} \\ \cmidrule{2-4}
 & GNNGuard & 61.92 & 60.81 \\
 & GNNGuard+WGTL$_\mathrm{P}$ (ours) & \textbf{63.63} & \textbf{63.53} \\ \cmidrule{2-4}
  & GADC(IV) & 72.08 & 73.14 \\
 & GADC(IV)+WGTL$_\mathrm{P}$ (ours) & \textbf{*76.16} & \textbf{75.65} \\ 
 \midrule 
\multirow{6}{*}{Polblogs} & GCN & 52.35 & 53.17 \\
 & GCN+WGTL$_\mathrm{P}$ (ours) & \textbf{53.27} & \textbf{63.91}\\ 
 \cmidrule{2-4}
 & GNNGuard & 49.69 & 47.03\\
 & GNNGuard+WGTL$_\mathrm{P}$ (ours) & \textbf{50.41} & \textbf{53.58} \\ \cmidrule{2-4}
 & GADC(IV) & 52.04 & 52.04 \\
 & GADC(IV)+WGTL$_\mathrm{P}$ (ours) & \textbf{*87.93} & \textbf{54.81} \\ 
 \bottomrule
\end{tabular}
}
\caption{Performance from one run under PGD-based adaptive adversarial attack with 20\% perturbation rate.}
\label{tab:adaptive}
\end{table}

\subsection{Performance of WGTL with persistence curve vectorization} 
Persistence curve~\cite{chung2022persistence} is a family of representations that are functions of the multiset of birth and death pairs $\{(b_i,d_i)\in \mathbb{R}^2: d_i > b_i, \forall i\in \{1,2,\ldots,n\}\}$. Among the various functions proposed by~\citet{chung2022persistence}, the \emph{Normalized life curve}  has been recommended by the authors for its stability, computational efficiency, and performance. The normalized life curve is defined as the following transformation:
\[
\frac{d_i - b_i}{\sum_{i=1}^{n} (d_i-b_i)}
\]
We have replaced the persistence image vectorization in our local topology encoding (Component I) and global topology encoding (Component II) with the normalized life curve vectorization and present the accuracy on the poisoned Cora-ML and Polblogs in Table~\ref{tab:results_pc}. We observe that on certain datasets such as Cora-ML, the persistence curve can provide slightly better robustness than the Persistence Image representations. However, the persistence curves produce a higher standard deviation than the persistence image. Since both vectorization methods produce better accuracy than the baseline, the proposed method is agnostic to the choice of alternative vectorizations of persistent topological features. 

\begin{table}[t]
\small
\centering
\resizebox{\columnwidth}{!}{
\begin{tabular}{@{}ccccc@{}}
\toprule
\textbf{Dataset} & \textbf{Models} & \multicolumn{3}{c}{\textbf{Perturbation Rate}} \\ \midrule
 &  & 0\% & 5\% & 10\% \\ \midrule
\multirow{3}{*}{\begin{tabular}[c]{@{}c@{}}Cora-ML\end{tabular}} 
 & GCN & 82.87$\pm$0.83 & 76.55$\pm$0.79 & 70.39$\pm$1.28 \\
 & GCN+WGTL$_{\mathrm{P}}$(PI) & \textbf{83.83$\pm$0.55} & 76.96$\pm$0.76 & 71.31$\pm$0.85 \\ 
& GCN+WGTL$_{\mathrm{P}}$(PC) & 83.72$\pm$0.50 & \textbf{78.16$\pm$1.04 }& \textbf{72.54$\pm$1.48} \\ 
 \midrule
\multirow{3}{*}{\begin{tabular}[c]{@{}c@{}}Polblogs\end{tabular}} 
  & GCN & 94.40$\pm$1.47 & 71.41$\pm$2.42 & 69.16$\pm$1.86 \\
 & GCN+WGTL$_{\mathrm{P}}$(PI) & \bf{95.95$\pm$0.15} & 73.02$\pm$1.13 & \bf{74.52$\pm$0.28} \\ 
& GCN+WGTL$_{\mathrm{P}}$(PC) & 95.19$\pm$0.32 & \textbf{73.05$\pm$1.16 }& 69.97$\pm$1.14 \\ 
\bottomrule
\end{tabular}%
}
\caption{Performance of WGTL with different topological vectorizations on Cora-ML under Mettack. PC = Normalized life curve, PI = Persistence image}
\label{tab:results_pc}
\end{table}

\end{document}